\documentclass[lettersize,journal]{IEEEtran}
\usepackage{amsmath,amsfonts}
\usepackage{array}
\usepackage{textcomp}
\usepackage{stfloats}
\usepackage{url}
\usepackage{tikz}
\usepackage{verbatim}
\usepackage{graphicx}
\def\BibTeX{{\rm B\kern-.05em{\sc i\kern-.025em b}\kern-.08em
    T\kern-.1667em\lower.7ex\hbox{E}\kern-.125emX}}
\usepackage{balance}

\usepackage{graphics} 
\usepackage{epsfig} 
\usepackage{mathptmx} 
\usepackage{times} 
\usepackage{amsmath} 
\usepackage{amssymb}  
\usepackage{siunitx}
\usepackage{hyperref}
\usepackage{multirow}
\usepackage{multicol}
\usepackage{nicefrac}
\usepackage{svg}
\usepackage{graphicx}
\usepackage{subcaption}
\usepackage{cite}
\usepackage{picinpar}
\usepackage[utf8]{inputenc}
\usepackage{colortbl}
\usepackage{soul}
\usepackage{multirow}
\usepackage{pifont}
\usepackage{color}
\usepackage{alltt}
\usepackage{enumerate}
\usepackage{siunitx}
\usepackage{epstopdf}
\usepackage{pbox}
\usepackage{amsfonts}
\usepackage{textcomp}
\usepackage{diagbox}
\usepackage{graphics} 
\usepackage{times} 
\usepackage{amssymb}  
\usepackage{microtype}
\usepackage{booktabs} 
\usepackage{xfrac}
\usepackage{nicefrac}
\usepackage{multicol}
\usepackage{array}
\usepackage{algorithm}
\usepackage{gensymb}
\usepackage{xr}
\usepackage{algpseudocode}
\usepackage{makecell}
\usepackage{longtable}
\usepackage{float}  
\usepackage{lineno}

\usepackage{ulem}

\usepackage{amsthm}
\newtheorem{theorem}{Theorem}[section]

\theoremstyle{definition}

\DeclareMathOperator*{\argmax}{arg\,max}
\DeclareMathOperator*{\argmin}{arg\,min}

\def\BibTeX{{\rm B\kern-.05em{\sc i\kern-.025em b}\kern-.08em
    T\kern-.1667em\lower.7ex\hbox{E}\kern-.125emX}}

\begin{document}

\title{Distributional Reinforcement Learning with Information Bottleneck for Uncertainty-Aware DRAM Equalization}

\author{Muhammad Usama and Dong Eui Chang\IEEEauthorrefmark{1}\\
	Control Laboratory, School of Electrical Engineering, KAIST, \\
	Daejeon, 34141, Republic of Korea

	\thanks{M. Usama and D. E. Chang are with the Control Laboratory, School of Electrical Engineering, KAIST, Daejeon, 34141, Republic of Korea (e-mail: usama@kaist.ac.kr; dechang@kaist.ac.kr). \\

    This work was supported by Institute of Information \& communications Technology Planning \& Evaluation (IITP) grant funded by the Korea government(MSIT) (No.2022-0-00469/ RS-2022-II220469, Development of Core Technologies for Task-oriented Reinforcement Learning for Commercialization of Autonomous Drones), and by Samsung Electronics Co., Ltd(Contract ID: MEM230315\_0004).
	
	\IEEEauthorrefmark{1}Corresponding author}
}

\maketitle

\begin{abstract}
Equalizer parameter optimization is critical for signal integrity in high-speed memory systems operating at multi-gigabit data rates. However, existing methods suffer from computationally expensive eye diagram evaluation, optimization of expected rather than worst-case performance, and absence of uncertainty quantification for deployment decisions. In this paper, we propose a distributional risk-sensitive reinforcement learning framework integrating Information Bottleneck latent representations with Conditional Value-at-Risk optimization. We introduce rate-distortion optimal signal compression achieving 51 times speedup over eye diagrams while quantifying epistemic uncertainty through Monte Carlo dropout. Distributional reinforcement learning with quantile regression enables explicit worst-case optimization, while PAC-Bayesian regularization certifies generalization bounds. Experimental validation on 2.4 million waveforms from eight memory units demonstrated mean improvements of 37.1\% and 41.5\% for 4-tap and 8-tap equalizer configurations with worst-case guarantees of 33.8\% and 38.2\%, representing 80.7\% and 89.1\% improvements over Q-learning baselines. The framework achieved 62.5\% high-reliability classification eliminating manual validation for most configurations. These results suggest the proposed framework provides a practical solution for production-scale equalizer optimization with certified worst-case guarantees.
\end{abstract}

\begin{IEEEkeywords}
	Signal Integrity, DRAM, Equalization, Decision Feedback Equalizer, Reinforcement Learning, Latent Representations, Autoencoder, Parameter Optimization, A2C
\end{IEEEkeywords}

\section{Introduction}

\IEEEPARstart{T}{he} unprecedented growth of data-intensive computing and artificial intelligence applications has driven high-speed Dynamic Random Access Memory (DRAM) systems to operate at data rates exceeding $6400\,\text{Mbps}$, with next-generation standards targeting beyond $10\,\text{Gbps}$~\cite{hbm_rl_tsv,rl_si_3dxpoint}. At these speeds, signal integrity degradation from inter-symbol interference, reflection, crosstalk, and channel loss fundamentally limits system performance and manufacturing yield. Equalizers compensate these impairments through carefully calibrated Decision Feedback Equalizer (DFE) and Continuous-Time Linear Equalizer (CTLE) structures~\cite{dfe_seminal_paper, dfe_seminal_paper_perceptron,cooptimized1,cooptimized2}, yet parameter optimization remains a critical bottleneck: exhaustive search exhibits exponential complexity ($O(k^8)$ for 8-parameter configurations with $k$ discretization levels), heuristic methods converge to suboptimal local minima~\cite{heuristics1,heuristics2}, and model-based approaches require accurate channel characterization often unavailable in manufacturing environments~\cite{LMS_REF1,LMS_REF2,LMS_REF3}. Industry reports indicate equalizer calibration consumes $15{-}25\%$ of production testing time, directly impacting time-to-market and cost competitiveness~\cite{kaist-reference}.

The fundamental challenge lies in navigating the trilemma between computational efficiency, worst-case performance guarantees, and deployment reliability. Traditional eye diagram-based signal integrity evaluation requires interpolation to $1\,\text{ps}$ resolution with computational complexity $O(n_x \cdot n_{\text{interp}})$ per waveform of length $n_x$, where $n_{\text{interp}} \approx 10^5$ for 10 ps sampling interpolated to 1 ps~\cite{expensive-eye-diagram}, rendering direct optimization prohibitive during training. Particle swarm optimization~\cite{pso_channel_eq} and genetic algorithms~\cite{GAforEyeDiagram} reduce iteration counts but lack convergence guarantees and fail to provide confidence bounds for production deployment. Bayesian optimization~\cite{bayesian_opt_ctle} offers systematic exploration yet remains computationally expensive for high-dimensional parameter spaces (dimension $d \geq 8$), requiring hundreds of expensive function evaluations. Critically, all existing methods optimize mean performance, neglecting tail risk essential for mission-critical DRAM systems where worst-case scenarios determine reliability specifications and warranty costs.

Recent advances in machine learning offer promising directions. Neural networks accelerate signal integrity prediction~\cite{wu2024dnn,shi2021surrogate,hui2024cnn} through learned surrogate models, achieving $10{-}100\times$ speedup over physics-based simulation. Autoencoders compress high-dimensional eye diagrams into low-dimensional representations~\cite{Song2019LearningPC,Song2021ModelBasedEL,ml_paper,ml_journal_paper}, enabling efficient evaluation during optimization. Reinforcement learning demonstrates success in equalizer parameter tuning~\cite{hbm_rl_tsv,rl_si_3dxpoint,ddpg_ddr5,kaist-ddpg,choi2021rlpeq}, learning policies that map channel states to optimal parameters without explicit models. However, these approaches face three critical limitations that prevent practical deployment: (i) standard autoencoders lack information-theoretic optimality guarantees and may retain task-irrelevant noise~\cite{representation_learning_review}; (ii) existing RL methods optimize expected performance rather than worst-case behavior, yielding solutions vulnerable to tail events~\cite{usama-q-learning-paper,usama-a2c-paper}; and (iii) absence of epistemic uncertainty quantification precludes confidence-based deployment decisions, requiring extensive manual validation that negates computational gains~\cite{dropout_bayesian,cooptimized2}.

This paper addresses these fundamental gaps through a distributional risk-sensitive framework integrating information-theoretic latent representations with model-free reinforcement learning. We employ the Information Bottleneck principle~\cite{representation_learning_review} for rate-distortion optimal compression, learning latent representations that preserve signal validity while maximizing compression, enabling $51\times$ computational speedup versus eye diagram evaluation with complexity $O(l \cdot n_{\text{hidden}})$ where latent dimension $l = 11 \ll 10{,}000$ and $n_{\text{hidden}}$ represents hidden layer width. Distributional reinforcement learning with quantile regression~\cite{rl_2} models full return distributions rather than expectations, enabling explicit Conditional Value-at-Risk (CVaR) optimization at $\alpha = 0.1$ (10\% worst-case) for tail risk mitigation. Monte Carlo dropout~\cite{dropout_bayesian} provides Bayesian uncertainty quantification without ensemble overhead, computing epistemic confidence $\sigma_{\text{unc}}$ from $M = 100$ stochastic forward passes. PAC-Bayesian regularization~\cite{rl_1} enforces generalization bounds that limit the gap between training and test performance with probability $1{-}\delta$, while spectral normalization constrains Lipschitz continuity ($K = 1$) for certified robustness to input perturbations~\cite{usama_nn_paper}.

Compared to deterministic policy gradients~\cite{ddpg_ddr5,kaist-ddpg}, our CVaR-based formulation directly optimizes tail performance rather than mean returns, achieving $80.7\%$ and $89.1\%$ improvements in worst-case signal integrity over Q-learning baselines~\cite{usama-q-learning-paper} for 4-tap DFE and 8-tap CTLE+DFE configurations respectively. Unlike sequential approaches~\cite{choi2022hybrid} requiring multiple channel interactions per parameter ($O(d)$ episodes), we determine all $d$ parameters simultaneously, reducing sample complexity by an order of magnitude. The information-theoretic foundation provides principled latent dimension selection via rate-distortion curves absent in standard autoencoder methods~\cite{ml_journal_paper}, while distributional Bellman convergence, as established in Theorem~\ref{thm:distributional_convergence}, guarantees exponential convergence in Wasserstein distance~\cite{rl_2}. The framework outputs deployment classifications (High Reliability, Moderate Confidence, Validation Required) based on joint CVaR-uncertainty criteria, achieving $62.5\%$ high-reliability classification rate on held-out DRAM units, eliminating manual validation for majority of configurations.

Our contributions are given as:
\begin{enumerate}
\item[\hspace{0.5em}1)] An Information Bottleneck encoder learning rate-distortion optimal latent compressions with variational bounds, as formalized in Theorem~\ref{thm:ib_bound}, achieving silhouette score $0.72$ versus $0.58$ for standard autoencoders with $51\times$ computational speedup.

\item[\hspace{0.5em}2)] A CVaR-based actor-critic framework using quantile regression for explicit worst-case optimization. Theorem~\ref{thm:cvar_gradient} derives the CVaR policy gradient, while Theorem~\ref{thm:distributional_convergence} proves exponential Wasserstein convergence, achieving $29.5\%$ relative improvement over standard A2C~\cite{usama-a2c-paper}.

\item[\hspace{0.5em}3)] PAC-Bayesian bounds in Theorem~\ref{thm:pac_bayesian} and Lipschitz constraints certifying generalization with probability $1{-}\delta$ and robustness to $\delta = 0.01$ perturbations, with empirical gap below $2.1\%$.

\item[\hspace{0.5em}4)] CVaR-based deployment classification on $2.4$ million waveforms demonstrating $37.1\%$ (DFE) and $41.5\%$ (CTLE+DFE) mean improvements with $33.8\%$ and $38.2\%$ worst-case guarantees, achieving $62.5\%$ high-reliability classification.
\end{enumerate}

\section{Dataset}\label{sec:dataset}
We utilize signal measurements from CPU-DRAM communication channels in server platforms operating at $6400\,\text{Mbps}$. Waveforms are obtained via write operation simulations using registered dual in-line memory modules typical of server architectures. Each of eight DRAM units contributes $300{,}000$ input-output waveform pairs, yielding $2.4$ million total pairs. For each unit, measurements capture both the original signal transmitted from the CPU (input waveform $\textbf{d}_i$) and the degraded signal arriving at the DRAM (output waveform $\textbf{d}_o$) after propagation through system interconnects. The optimization operates exclusively on output waveforms $\textbf{d}_o$, as these represent the signals requiring equalization.

Data organization employs overlapping windows extracted with single-sample stride. Each window contains $n_x = 10{,}000$ consecutive time points sampled at $10\,\text{ps}$ intervals, corresponding to unit interval $T_{UI} = 156.3\,\text{ps}$. Eye diagram generation and window area computation employ PyEye~\cite{pyeye2023}, applying linear interpolation to achieve $1\,\text{ps}$ temporal resolution. Figure~\ref{fig:data_figures} illustrates the measurement infrastructure and representative waveform samples.

Binary labels $y \in \{0,1\}$ indicate signal validity for each output waveform $d_o \in \mathbb{R}^{n_x}$, determined via eye diagram mask testing. The mask comprises an $80\,\text{mV} \times 35\,\text{ps}$ rectangular aperture positioned at the eye-opening center. Waveforms exhibiting signal crossings within this aperture receive invalid labels ($y=0$), while those maintaining clear apertures are marked valid ($y=1$), as demonstrated in Fig.~\ref{fig:data_labeling}.

\begin{figure}
	\centering
	\begin{subfigure}[b]{0.48\columnwidth}
		\centering
		\includegraphics[width=\textwidth]{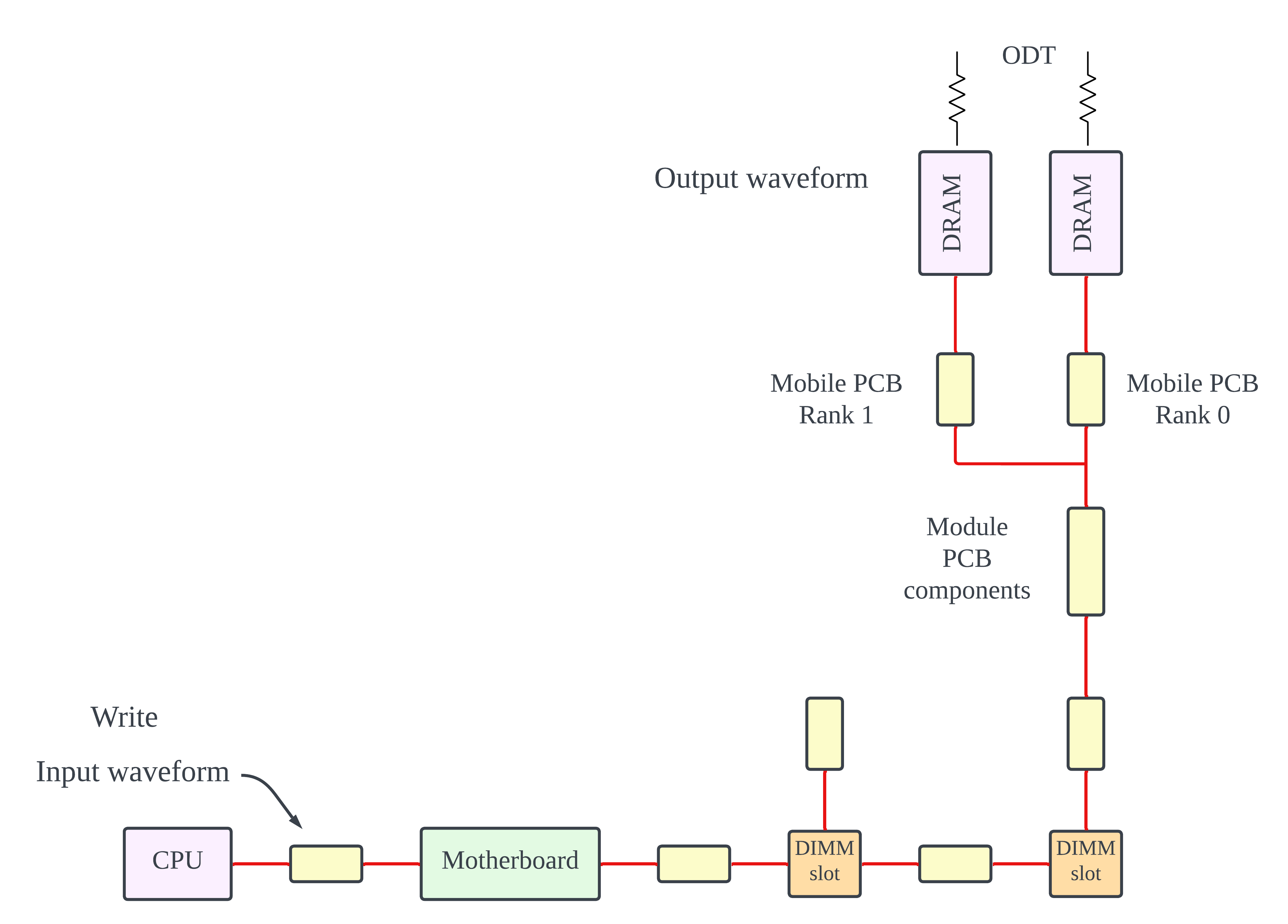}
		\caption{Data collection setup.}
		\label{fig:data_pipeline}
	\end{subfigure}
	\hfill
	\begin{subfigure}[b]{0.48\columnwidth}
		\centering
		\includegraphics[width=\textwidth]{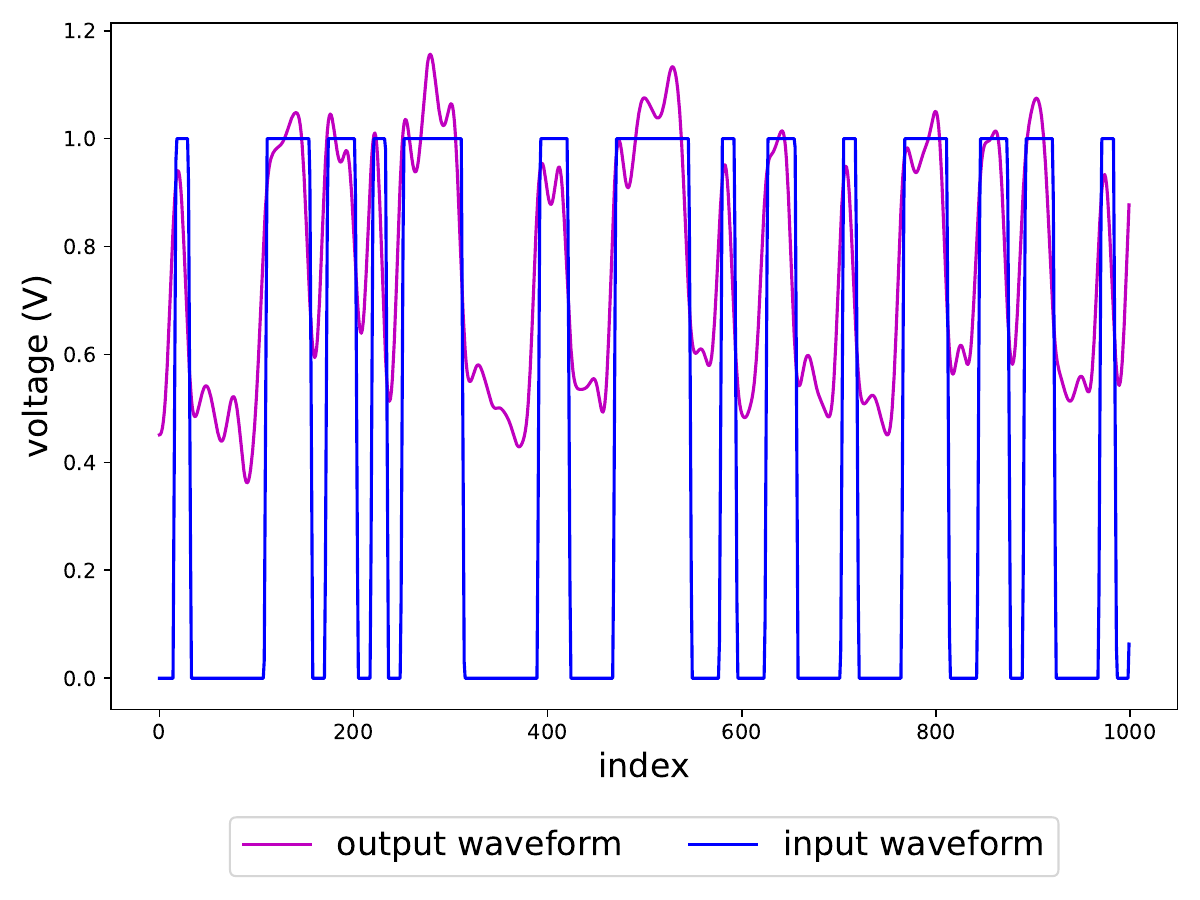}
		\caption{Visualization of the collected dataset.}
		\label{fig:data_vis}
	\end{subfigure}
	\caption
	{
		The figure shows (a) the server memory system with double-sided DIMMs used to generate our dataset, and (b) a visualization of 1000 sample values for DRAM 1 from the dataset that plots the DRAM output waveform and the corresponding input waveform.
	}
	\label{fig:data_figures}
\end{figure}

\begin{figure}[t]
	\centering
	\begin{subfigure}[b]{0.49\columnwidth}
		\centering
		\includegraphics[width=\textwidth]{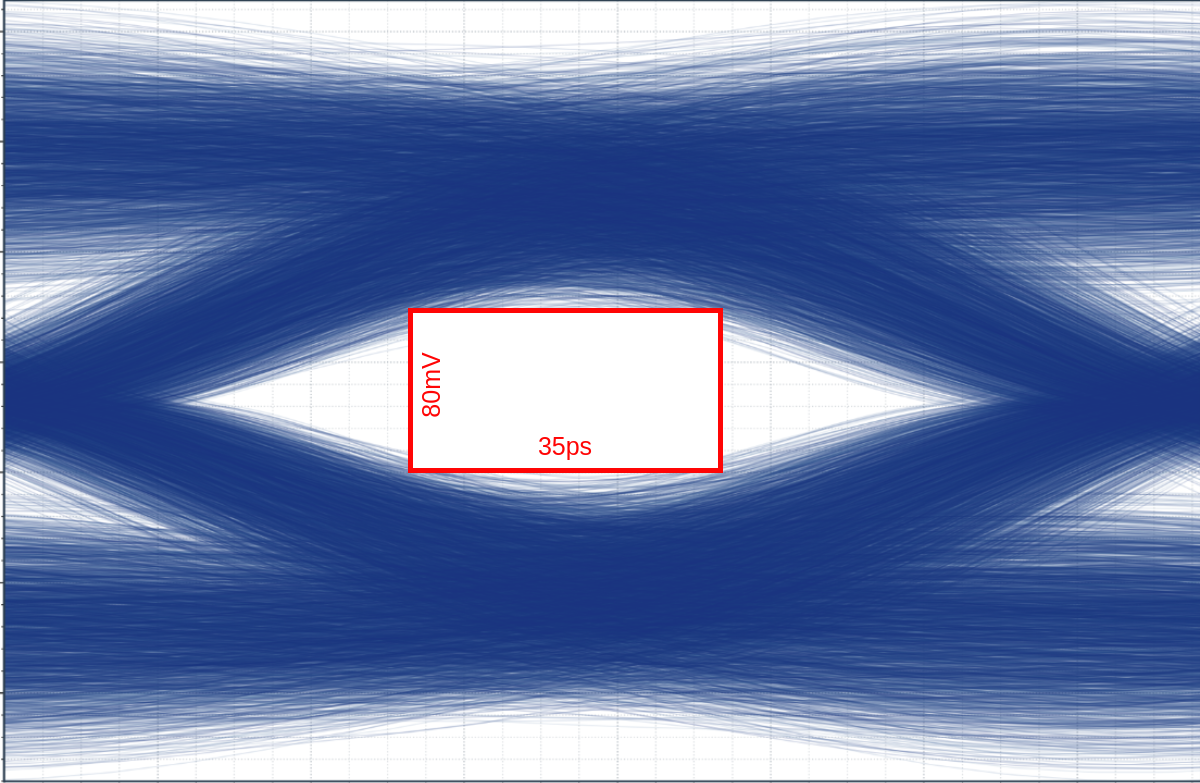}
		\caption{An invalid signal ($y=0$)}
		\label{fig:invalid_signal}
	\end{subfigure}
	\hfill
	\begin{subfigure}[b]{0.49\columnwidth}
		\centering
		\includegraphics[width=\textwidth]{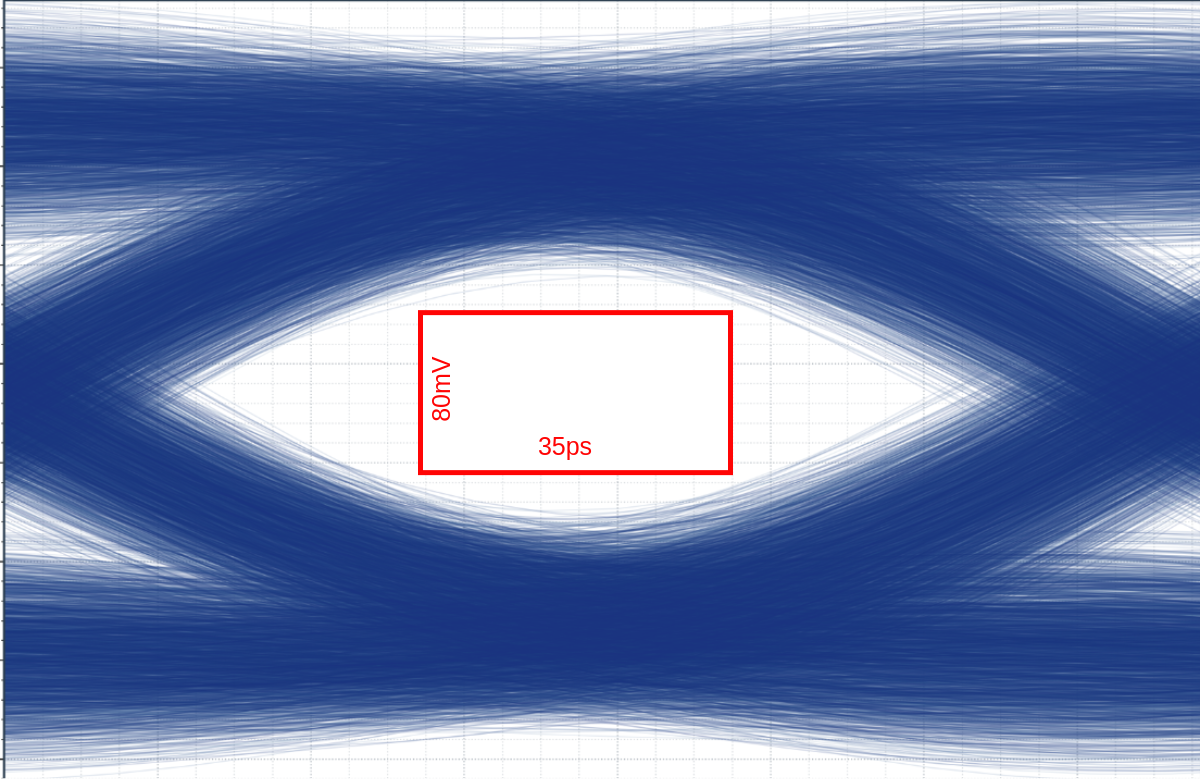}
		\caption{A valid signal ($y=1$)}
		\label{fig:valid_signal}
	\end{subfigure}
	\caption{Illustration of the signal validity labeling criteria. The rectangular window (80 mV $\times$ 35 ps) is shown in red. (a) An invalid signal where signal transitions intersect the window, and (b) a valid signal where no transitions occur within the window region.}
	\label{fig:data_labeling}
\end{figure}

\section{Proposed Methodology}\label{sec:methodology}

We propose a unified framework for DRAM equalizer parameter optimization that integrates Bayesian neural networks for uncertainty quantification with model-free reinforcement learning. The methodology addresses two fundamental challenges: efficient signal integrity evaluation without computationally expensive eye diagram analysis, and robust parameter optimization with explicit confidence guarantees in the absence of channel models. The framework operates in three stages: latent representation learning with epistemic uncertainty estimation, uncertainty-aware reinforcement learning for parameter optimization, and deployment classification based on confidence thresholds. The complete algorithm is presented in Algorithm~\ref{alg:drib_a2c}.

\subsection{Problem Formulation}\label{sec:problem_formulation}

Consider a DRAM communication channel where an input waveform $\textbf{d}_i \in \mathbb{R}^{n_x}$ transmitted by the central processing unit experiences channel degradation, resulting in an output waveform $\textbf{d}_o \in \mathbb{R}^{n_x}$ received by the DRAM. The signal length $n_x = 10000$ corresponds to samples acquired at $10\,\text{ps}$ intervals with unit interval $T_{UI} = 156.3\,\text{ps}$ at $6400\,\text{Mbps}$ data rate. Equalization applies a parametric transformation to produce an equalized output $\textbf{d}_o^e = \mathcal{E}(\textbf{d}_o; \textbf{p})$, where $\mathcal{E}$ represents the equalizer operation and $\textbf{p} \in \mathcal{P}$ denotes the parameter vector.

For Decision Feedback Equalizer (DFE) configurations, $\textbf{p} = \{t_1, t_2, t_3, t_4\}$ comprises four tap coefficients where $\mathcal{P} = [0,1]^4$. The DFE applies inter-symbol interference cancellation through feedback of previously detected symbols, i.e. \(\textbf{d}_o^e[n] = \textbf{d}_o[n] + \sum_{i=1}^{4} t_i \cdot \text{sgn}(\textbf{d}_o^e[n-i])\), where $\text{sgn}(\cdot)$ denotes the sign function and tap coefficients $t_i \in [0,1]$ are normalized to physical ranges during equalization. For cascaded Continuous-Time Linear Equalizer (CTLE) and DFE structures, $\textbf{p} = \{G_{\text{dc}}, f_z, f_p, G_p, t_1, t_2, t_3, t_4\}$ includes DC gain $G_{\text{dc}}$, zero frequency $f_z$, pole frequency $f_p$, pole gain $G_p$, and four DFE taps, with $\mathcal{P} = [0,1]^8$. The CTLE applies frequency-domain compensation prior to DFE processing, with the transfer function $H_{\text{CTLE}}(s) = G_{\text{dc}} \frac{1 + s/2\pi f_z}{1 + s/2\pi f_p}$ providing high-frequency boost to counteract channel loss.

Signal integrity is quantified by the eye-opening window area metric $\mu(\textbf{d}_o^e)$, representing the largest rectangular region within the eye diagram free from signal crossings. The metric is computed using standardized eye diagram analysis with an $80\,\text{mV} \times 35\,\text{ps}$ reference window. Higher values of $\mu(\textbf{d}_o^e)$ indicate superior signal integrity and reduced bit error probability. This metric serves as a direct proxy for Bit Error Rate (BER), with empirical studies showing that a 36\% improvement in eye area typically correlates with meeting the $10^{-12}$ BER target required by JEDEC specifications~\cite{ddr5_equalization_2019, lpddr5_dfe_2021}.

Traditional optimization objectives based on expected performance fail to account for worst-case scenarios critical in mission-critical DRAM systems. We adopt a risk-sensitive formulation using CVaR at level $\alpha \in (0,1)$, defined as the expected value in the worst $\alpha$-quantile of the performance distribution:
\begin{equation}
\text{CVaR}_\alpha[X] = \mathbb{E}[X \mid X \leq \text{VaR}_\alpha[X]], \nonumber
\end{equation}
where $\text{VaR}_\alpha[X] = \inf\{x : P(X \leq x) \geq \alpha\}$ denotes the Value-at-Risk. The optimization objective seeks equalizer parameters maximizing worst-case signal integrity:
\begin{equation}
\textbf{p}^* = \argmax_{\textbf{p} \in \mathcal{P}} \text{CVaR}_\alpha \left[ \mu(\mathcal{E}(\textbf{d}_o; \textbf{p})) \right], \nonumber
\end{equation}
with risk level $\alpha = 0.1$, optimizing for the 10\% worst-case performance. Intuitively, this objective prioritizes the subset of DRAM channels suffering from the most severe signal degradation, often due to process corners or worst-case inter-symbol interference patterns, rather than optimizing the average channel behavior. This coherent risk measure provides explicit worst-case guarantees essential for production deployment, unlike mean-variance formulations that lack tail risk characterization.

\subsection{Information Bottleneck Latent Representation}\label{sec:information_bottleneck}

Computational efficiency requires replacing direct eye diagram evaluation with a learned signal integrity proxy. Standard autoencoders minimize reconstruction error without controlling information flow, potentially retaining task-irrelevant details. We employ the Information Bottleneck principle to learn rate-distortion optimal latent representations that compress waveforms while preserving signal validity information.

\subsubsection{Variational Information Bottleneck Objective}

For output waveform $\textbf{D}_o \in \mathbb{R}^{n_x}$ and validity label $Y \in \{0,1\}$, the Information Bottleneck objective seeks a stochastic encoder producing latent representation $\textbf{Z}$ that maximizes task-relevant information while minimizing input information:
\begin{equation}
\max_{\boldsymbol{\phi}} \quad I(\textbf{Z}; Y) - \beta I(\textbf{Z}; \textbf{D}_o), \nonumber
\end{equation}
where $I(\cdot; \cdot)$ denotes mutual information and $\beta > 0$ controls the rate-distortion trade-off. The first term encourages preservation of validity prediction capability, while the second term enforces compression.

Since mutual information is intractable for continuous variables, we employ a variational bound. Let $p_{\boldsymbol{\phi}}(\textbf{z}|\textbf{d}_o)$ denote the stochastic encoder, $q_{\boldsymbol{\psi}}(\textbf{z})$ a variational approximation to the marginal distribution $p(\textbf{z})$, and $p_{\boldsymbol{\omega}}(y|\textbf{z})$ a variational decoder for label prediction. The tractable variational Information Bottleneck loss is:
\begin{align}
\mathcal{L}_{\text{IB}}(\boldsymbol{\phi}, \boldsymbol{\psi}, \boldsymbol{\omega}) &= \mathbb{E}_{p(\textbf{d}_o, y)} \Big[ \mathbb{E}_{p_{\boldsymbol{\phi}}(\textbf{z}|\textbf{d}_o)} \big[ -\log p_{\boldsymbol{\omega}}(y|\textbf{z}) \nonumber \\
&\quad + \beta \log \frac{p_{\boldsymbol{\phi}}(\textbf{z}|\textbf{d}_o)}{q_{\boldsymbol{\psi}}(\textbf{z})} \big] \Big].
\label{eq:vib_loss}
\end{align}
The encoder is parameterized as a diagonal Gaussian \(p_{\boldsymbol{\phi}}(\textbf{z}|\textbf{d}_o) = \mathcal{N}(\boldsymbol{\mu}_{\phi}(\textbf{d}_o), \text{diag}(\boldsymbol{\sigma}_{\phi}^2(\textbf{d}_o)))\), with mean $\boldsymbol{\mu}_{\phi}: \mathbb{R}^{n_x} \rightarrow \mathbb{R}^{l}$ and variance $\boldsymbol{\sigma}_{\phi}^2: \mathbb{R}^{n_x} \rightarrow \mathbb{R}^{l}$ networks. The variational prior is $q_{\boldsymbol{\psi}}(\textbf{z}) = \mathcal{N}(\mathbf{0}, \mathbf{I})$, enabling closed-form Kullback-Leibler divergence computation. The following theorem establishes the optimality of this formulation:

\begin{theorem}[Information Bottleneck Rate-Distortion Bound]\label{thm:ib_bound}
For encoder \(p_{\boldsymbol{\phi}}(\textbf{z}|\textbf{d}_o) = \mathcal{N}(\boldsymbol{\mu}_{\phi}(\textbf{d}_o), \text{diag}(\boldsymbol{\sigma}_{\phi}^2(\textbf{d}_o)))\) and standard Gaussian prior $q_{\boldsymbol{\psi}}(\textbf{z}) = \mathcal{N}(\mathbf{0}, \mathbf{I})$, the Information Bottleneck objective satisfies:
\begin{align}
I(\textbf{Z}; Y) - \beta I(\textbf{Z}; \textbf{D}_o) &\geq -\mathcal{L}_{\text{IB}}(\boldsymbol{\phi}, \boldsymbol{\psi}, \boldsymbol{\omega}), \nonumber
\end{align}
with equality when $q_{\boldsymbol{\psi}}(\textbf{z}) = p(\textbf{z})$ and $p_{\boldsymbol{\omega}}(y|\textbf{z}) = p(y|\textbf{z})$.
\end{theorem}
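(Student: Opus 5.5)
The plan is the standard variational Information Bottleneck argument: bound the two mutual-information terms separately, one from below and one from above, and then combine them using $\beta>0$. Throughout, the joint distribution is taken to be $p(\textbf{d}_o,y,\textbf{z}) = p(\textbf{d}_o,y)\,p_{\boldsymbol{\phi}}(\textbf{z}|\textbf{d}_o)$. This encodes the Markov chain $Y \to \textbf{D}_o \to \textbf{Z}$ and fixes the true marginal $p(\textbf{z})$ and the true posterior $p(y|\textbf{z})$ that appear in the equality condition.

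\emph{Step 1 (relevance term).} Write
\begin{equation}
I(\textbf{Z};Y) = \mathbb{E}_{p(y,\textbf{z})}[\log p(y|\textbf{z})] + H(Y). \nonumber
\end{equation}
Insert the variational decoder by adding and subtracting $\log p_{\boldsymbol{\omega}}(y|\textbf{z})$. This gives
\begin{equation}
\mathbb{E}[\log p(y|\textbf{z})] = \mathbb{E}[\log p_{\boldsymbol{\omega}}(y|\textbf{z})] + \mathbb{E}_{p(\textbf{z})}\big[\mathrm{KL}(p(\cdot|\textbf{z})\,\|\,p_{\boldsymbol{\omega}}(\cdot|\textbf{z}))\big]. \nonumber
\end{equation}
The KL term is nonnegative, and $H(Y)\ge 0$ because $Y$ is discrete. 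Hence $I(\textbf{Z};Y) \ge \mathbb{E}[\log p_{\boldsymbol{\omega}}(y|\textbf{z})]$. By the Markov factorization, this expectation equals $\mathbb{E}_{p(\textbf{d}_o,y)}\mathbb{E}_{p_{\boldsymbol{\phi}}(\textbf{z}|\textbf{d}_o)}[\log p_{\boldsymbol{\omega}}(y|\textbf{z})]$.

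\emph{Step 2 (compression term).} Write $I(\textbf{Z};\textbf{D}_o) = \mathbb{E}[\log p_{\boldsymbol{\phi}}(\textbf{z}|\textbf{d}_o)/p(\textbf{z})]$. Replacing $p(\textbf{z})$ by $q_{\boldsymbol{\psi}}(\textbf{z})$ gives
\begin{equation}
I(\textbf{Z};\textbf{D}_o) = \mathbb{E}\Big[\log \tfrac{p_{\boldsymbol{\phi}}(\textbf{z}|\textbf{d}_o)}{q_{\boldsymbol{\psi}}(\textbf{z})}\Big] - \mathrm{KL}(p(\textbf{z})\,\|\,q_{\boldsymbol{\psi}}(\textbf{z})). \nonumber
\end{equation}
Since the KL term is nonnegative, $I(\textbf{Z};\textbf{D}_o) \le \mathbb{E}_{p(\textbf{d}_o)}\big[\mathrm{KL}(p_{\boldsymbol{\phi}}(\cdot|\textbf{d}_o)\,\|\,q_{\boldsymbol{\psi}})\big]$. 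For the diagonal Gaussian encoder and the $\mathcal{N}(\mathbf{0},\mathbf{I})$ prior, this KL is finite and has the usual closed form. That closed form is the only place the Gaussian hypotheses enter; it guarantees every quantity is well defined.

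\emph{Step 3 (combine).} Multiply the Step 2 inequality by $-\beta<0$, which reverses it, and add it to the Step 1 inequality. The right-hand side is exactly $-\mathcal{L}_{\text{IB}}$ from \eqref{eq:vib_loss}. For the equality case, the two dropped KL gaps vanish precisely when $p_{\boldsymbol{\omega}}(y|\textbf{z}) = p(y|\textbf{z})$ for $p(\textbf{z})$-almost every $\textbf{z}$ and when $q_{\boldsymbol{\psi}} = p(\textbf{z})$.

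The main obstacle is this equality clause rather than the inequality. Even when both KL gaps vanish, the slack $H(Y)$ remains. The honest statement is therefore equality up to the additive constant $H(Y)$, which does not depend on $(\boldsymbol{\phi},\boldsymbol{\psi},\boldsymbol{\omega})$ and so does not affect the optimizer. Literal equality holds only when $H(Y)=0$. I would either state the result in that form or absorb $H(Y)$ into $\mathcal{L}_{\text{IB}}$ as a constant.

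A secondary technical point is justifying the exchange of expectations and the finiteness of $\mathbb{E}[\log p_{\boldsymbol{\omega}}]$. This follows if $p_{\boldsymbol{\omega}}$ is bounded away from zero, for example through a sigmoid output with clipped logits.
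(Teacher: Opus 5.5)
Your proposal is correct and follows the same route as the paper's appendix proof: the standard variational IB argument that lower-bounds $I(\textbf{Z};Y)$ through the decoder $p_{\boldsymbol{\omega}}$ via nonnegativity of KL, upper-bounds $I(\textbf{Z};\textbf{D}_o)$ by replacing $p(\textbf{z})$ with $q_{\boldsymbol{\psi}}$, and combines the two using $\beta>0$. You are in fact more careful than the paper on two points: the exact identity is $I(\textbf{Z};\textbf{D}_o)=\mathbb{E}_{p(\textbf{d}_o)}[D_{KL}(p_{\boldsymbol{\phi}}(\textbf{z}|\textbf{d}_o)\|q_{\boldsymbol{\psi}})]-D_{KL}(p(\textbf{z})\|q_{\boldsymbol{\psi}})$ (the paper writes $+$, which is too weak to give the stated bound after multiplying by $-\beta$), and you are right that the equality clause holds only up to the additive constant $H(Y)$, which the paper's proof silently drops.
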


\noindent\textit{Proof:} This bound follows from standard variational inference~\cite{alemi2017deep}. See Appendix~\ref{app:proof_ib} for derivation details.

\vspace{0.5em}
\noindent Theorem~\ref{thm:ib_bound} guarantees that minimizing the tractable loss $\mathcal{L}_{\text{IB}}$ optimizes a lower bound on the intractable Information Bottleneck objective, providing theoretical justification for the approach.

\subsubsection{Network Architecture and Training}

The encoder comprises fully connected layers with dimensions $n_x \to 512 \to 256 \to 2l$ outputting both mean and log-variance parameters. The latent dimension is $l = 11$. The classifier network $p_{\boldsymbol{\omega}}(y|\textbf{z})$ uses architecture $l \to 64 \to 1$ with sigmoid output. A reconstruction decoder $g_{\boldsymbol{\psi}}: \mathbb{R}^{l} \rightarrow \mathbb{R}^{n_x}$ with symmetric architecture $l \to 256 \to 512 \to n_x$ is included to facilitate representation learning, with auxiliary reconstruction loss $\mathcal{L}_{\text{rec}} = \Vert \textbf{d}_o - g_{\boldsymbol{\psi}}(\textbf{z}) \Vert_2^2$ weighted by $\lambda_{\text{rec}} = 0.1$.

The complete training objective combines Information Bottleneck, classification, and reconstruction terms:
\begin{align}
\mathcal{L}_{\text{total}} &= \mathbb{E}_{p(\textbf{d}_o,y)} \big[ -\log p_{\boldsymbol{\omega}}(y|\textbf{z}) \nonumber + \beta D_{KL}(p_{\boldsymbol{\phi}}(\textbf{z}|\textbf{d}_o) \| q_{\boldsymbol{\psi}}(\textbf{z})) \nonumber \\
&\quad + \lambda_{\text{rec}} \Vert \textbf{d}_o - g_{\boldsymbol{\psi}}(\textbf{z}) \Vert_2^2 \big], \nonumber
\end{align}
with $\beta = 0.01$ balancing compression against prediction accuracy. The reparameterization trick enables backpropagation through stochastic sampling: $\textbf{z} = \boldsymbol{\mu}_{\phi}(\textbf{d}_o) + \boldsymbol{\sigma}_{\phi}(\textbf{d}_o) \odot \boldsymbol{\epsilon}$ where $\boldsymbol{\epsilon} \sim \mathcal{N}(\mathbf{0}, \mathbf{I})$.

\subsubsection{Monte Carlo Uncertainty Quantification}

Epistemic uncertainty arises from finite training data and model capacity limitations. We approximate Bayesian inference using Monte Carlo dropout with probability $p = 0.1$ applied to hidden layers during both training and inference. For waveform $\textbf{d}$, uncertainty is estimated through $M = 100$ stochastic forward passes:
\begin{align}
\hat{\boldsymbol{\mu}}_z &= \frac{1}{M}\sum_{m=1}^M \textbf{z}^{(m)}, \quad \textbf{z}^{(m)} \sim p_{\boldsymbol{\phi}}(\textbf{z}|\textbf{d}; \boldsymbol{\epsilon}^{(m)}), \label{eq:latent_mean} \\
\hat{\boldsymbol{\Sigma}}_z &= \frac{1}{M-1}\sum_{m=1}^M (\textbf{z}^{(m)} - \hat{\boldsymbol{\mu}}_z)(\textbf{z}^{(m)} - \hat{\boldsymbol{\mu}}_z)^{\top}, \label{eq:latent_covariance}
\end{align}
where $\boldsymbol{\epsilon}^{(m)}$ represents the dropout mask. The scalar uncertainty is $\sigma_{\text{unc}} = \Vert \text{diag}(\hat{\boldsymbol{\Sigma}}_z)^{1/2} \Vert_2$.

\subsection{Latent Space Anchor Point}\label{sec:anchor_point}

Effective reinforcement learning requires a stable reference representing optimal signal integrity. We define an anchor point in latent space corresponding to the central tendency of valid signals. Let $\mathcal{S}_{\text{valid}} = \{\textbf{z}_j = \ell_{\boldsymbol{\phi}}(\textbf{d}_j) \mid y_j = 1, j = 1, \ldots, N_{\text{valid}}\}$ denote the set of latent representations for all valid waveforms in the training dataset.

The anchor point $\textbf{c} \in \mathbb{R}^{l}$ is computed as the Fermat-Weber point minimizing total distance to all valid representations:
\begin{equation}
\textbf{c} = \argmin_{\textbf{k} \in \mathbb{R}^{l}} \sum_{j=1}^{N_{\text{valid}}} \Vert \textbf{z}_j - \textbf{k} \Vert_2.
\label{eq:anchor_point}
\end{equation}

This geometric median provides robustness to outliers compared to the arithmetic mean. The anchor serves as the target for equalized signals, with proximity to $\textbf{c}$ indicating superior signal integrity. The Fermat-Weber point is computed iteratively using the Weiszfeld algorithm with convergence tolerance $10^{-6}$. The reward function, defined in Equation~\eqref{eq:wasserstein_reward}, measures distance from the equalized signal's latent representation to this anchor point, providing a computationally efficient proxy for signal validity. Algorithm~\ref{alg:drib_a2c} presents our proposed DR-IB-A2C framework.

\begin{algorithm}[t]
\caption{Distributional Risk-Sensitive Information Bottleneck Actor-Critic (DR-IB-A2C)}
\label{alg:drib_a2c}
\begin{algorithmic}[1]
\State \textbf{Input:} Training waveforms $\{\textbf{d}_i, \textbf{d}_o, y\}$, risk level $\alpha$
\State \textbf{Output:} Optimal equalizer parameters $\textbf{p}^*$

\State \textbf{Phase 1: Information Bottleneck Encoder Training}
\For{epoch $= 1$ to $E_{\text{encoder}}$}
    \State Sample minibatch $\{(\textbf{d}_o^{(j)}, y^{(j)})\}_{j=1}^B$
    \State Compute VIB loss via Eq.~\eqref{eq:vib_loss}
    \State Update encoder $\boldsymbol{\phi}$, decoder $\boldsymbol{\omega}$, prior $\boldsymbol{\psi}$ via Adam
\EndFor
\State Compute anchor point $\textbf{c}$ via Eq.~\eqref{eq:anchor_point} using Weiszfeld algorithm

\State \textbf{Phase 2: Distributional CVaR Reinforcement Learning}
\State Initialize policy $\pi_{\boldsymbol{\theta}}$, quantile critic $\theta^{\boldsymbol{\omega}}$, replay buffer $\mathcal{D}$
\For{epoch $= 1$ to $E_{\text{RL}}$}
    \For{episode $= 1$ to $K$}
        \State Sample initial waveform $\textbf{d}_o \sim \mathcal{D}_{\text{train}}$
        \State Encode with uncertainty: $\hat{\boldsymbol{\mu}}_z, \hat{\boldsymbol{\Sigma}}_z \leftarrow$ MC-Dropout($\textbf{d}_o$; $M$)
        \State State: $\textbf{s} \leftarrow [\hat{\boldsymbol{\mu}}_z^{\top}, \text{diag}(\hat{\boldsymbol{\Sigma}}_z)^{1/2\top}]^{\top}$
        \State Sample action: $\textbf{a} \sim \pi_{\boldsymbol{\theta}}(\cdot|\textbf{s})$
        \State Apply equalizer: $\textbf{d}_o^e \leftarrow \mathcal{E}(\textbf{d}_o; \textbf{a})$
        \State Compute reward: $R \leftarrow -SW_2(\ell_{\boldsymbol{\phi}}(\textbf{d}_o^e), \textbf{c}) - \lambda_{\text{unc}} \sigma_{\text{unc}}$ via Eq.~\eqref{eq:wasserstein_reward}
        \State Store transition $(\textbf{s}, \textbf{a}, R)$ in $\mathcal{D}$
    \EndFor
    \State Sample minibatch from $\mathcal{D}$
    \State Update quantile critic $\boldsymbol{\omega}$ via Eq.~\eqref{eq:quantile_loss}
    \State Update policy $\boldsymbol{\theta}$ via CVaR gradient Eq.~\eqref{eq:actor_update}
    \State Apply PAC regularization via Eq.~\eqref{eq:pac_regularization}
\EndFor
\State \textbf{return} $\textbf{p}^* \leftarrow \boldsymbol{\mu}_{\theta}(\textbf{s}_{\text{test}})$ for test waveforms
\end{algorithmic}
\end{algorithm}

\subsection{Distributional Risk-Sensitive Reinforcement Learning}\label{sec:distributional_rl}

Equalizer parameter optimization is formulated as an episodic Markov Decision Process~\cite{sutton_barto_rl} with distributional return modeling and risk-sensitive policy optimization. Unlike standard value-based methods that learn expected returns, we model the full return distribution to enable explicit worst-case optimization via Conditional Value-at-Risk.

\subsubsection{Return Distribution Modeling}

Let $G_t = \sum_{k=0}^{\infty} \gamma^k R_{t+k}$ denote the cumulative discounted return from time $t$ with discount factor $\gamma = 0.98$. Standard reinforcement learning learns the value function $Q^{\pi}(\textbf{s}, \textbf{a}) = \mathbb{E}[G_t \mid \textbf{s}_t = \textbf{s}, \textbf{a}_t = \textbf{a}]$. We instead learn the random return distribution $Z^{\pi}(\textbf{s}, \textbf{a})$ satisfying the distributional Bellman equation:
\begin{equation}
Z^{\pi}(\textbf{s}, \textbf{a}) \stackrel{D}{=} R(\textbf{s}, \textbf{a}) + \gamma Z^{\pi}(\textbf{S}', \textbf{A}'),
\label{eq:distributional_bellman}
\end{equation}
where $\stackrel{D}{=}$ denotes equality in distribution. Physically, this distribution captures the intrinsic variability of the channel response arising from manufacturing process corners, thermal fluctuations, and stochastic inter-symbol interference patterns~\cite{dram_pvt_2016, dram_latency_pvt_2024}. By modeling the full distribution rather than just the mean, the agent learns to distinguish between actions that yield high expected rewards with high variance (risky) versus those with stable outcomes (robust).

We parameterize the return distribution using $N = 51$ quantile values $\{\tau_i\}_{i=1}^N$ where $\tau_i = \frac{2i-1}{2N}$ are uniformly spaced quantiles. The quantile function $F_{Z^{\pi}}^{-1}(\tau; \textbf{s}, \textbf{a})$ is approximated by a neural network $\theta_i^{\boldsymbol{\omega}}(\textbf{s}, \textbf{a})$ representing the $\tau_i$-quantile of the return distribution at state-action pair $(\textbf{s}, \textbf{a})$.

Training employs the quantile Huber loss for robustness to outliers. For target quantile $\tau$ and threshold $\kappa = 1.0$, the asymmetric Huber loss is \(\rho_{\tau}^{\kappa}(\delta) = |\tau - \mathbb{I}(\delta < 0)| \cdot \mathcal{L}_{\kappa}(\delta)\), where
\begin{equation}
\mathcal{L}_{\kappa}(\delta) = \begin{cases} \frac{1}{2}\delta^2 & \text{if } |\delta| \leq \kappa \\ \kappa(|\delta| - \frac{1}{2}\kappa) & \text{otherwise} \end{cases} \nonumber
\end{equation}
is the Huber loss. The quantile critic loss aggregates over all quantiles:
\begin{align}
\mathcal{L}_{\text{quantile}}(\boldsymbol{\omega}) &= \mathbb{E}_{(\textbf{s},\textbf{a},R,\textbf{s}')} \Big[ \frac{1}{N} \sum_{i=1}^N \rho_{\tau_i}^{\kappa}\big(\theta_i^{\boldsymbol{\omega}}(\textbf{s}, \textbf{a}) \nonumber \\
&\quad - (R + \gamma \theta_i^{\boldsymbol{\omega}'}(\textbf{s}', \textbf{a}'))\big) \Big],
\label{eq:quantile_loss}
\end{align}
where $\boldsymbol{\omega}'$ denotes target network parameters updated periodically via exponential moving average. The following theorem establishes convergence guarantees for distributional reinforcement learning:

\begin{theorem}[Distributional Bellman Convergence]\label{thm:distributional_convergence}
Let $\mathcal{T}^{\pi}$ be the distributional Bellman operator defined by Equation~\eqref{eq:distributional_bellman}. For any initial return distribution $Z_0$, the sequence $Z_{k+1} = \mathcal{T}^{\pi} Z_k$ converges to the true return distribution $Z^{\pi}$ in Wasserstein distance:
\begin{equation}
W_p(Z_k, Z^{\pi}) \leq \gamma^k W_p(Z_0, Z^{\pi}),
\label{eq:wasserstein_convergence}
\end{equation}
for all $p \geq 1$, where
\begin{equation}
W_p(\mu, \nu) = \left(\inf_{\gamma \in \Gamma(\mu,\nu)} \int |x-y|^p d\gamma(x,y)\right)^{1/p} \nonumber
\end{equation}
is the Wasserstein distance.
\end{theorem}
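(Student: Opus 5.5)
We follow the standard argument of Bellemare et al.~\cite{rl_2}: show that $\mathcal{T}^{\pi}$ is a $\gamma$-contraction in the maximal Wasserstein metric
\begin{equation}
\bar{d}_p(Z_1, Z_2) = \sup_{(\textbf{s},\textbf{a})} W_p\big(Z_1(\textbf{s},\textbf{a}), Z_2(\textbf{s},\textbf{a})\big),
\nonumber
\end{equation}
then iterate. The statement's $W_p(Z_k, Z^{\pi})$ is interpreted in this supremum sense, which also gives the pointwise bound at each state-action pair. We assume bounded rewards, which holds here since the reward in Eq.~\eqref{eq:wasserstein_reward} is a negated sliced Wasserstein distance plus an uncertainty penalty on a compact latent image. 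We also restrict to return distributions with finite $p$-th moments so that $\bar{d}_p$ is finite.

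The key step is the contraction inequality $\bar{d}_p(\mathcal{T}^{\pi} Z_1, \mathcal{T}^{\pi} Z_2) \leq \gamma\, \bar{d}_p(Z_1, Z_2)$. We use two properties of $W_p$ on the real line.
\begin{enumerate}
\item[(i)] Scaling and translation: for a random variable $A$ independent of $U, V$, we have $W_p(A + \gamma U, A + \gamma V) \leq \gamma W_p(U, V)$. The proof couples $U, V$ optimally and uses the same $A$ on both sides.
\item[(ii)] Convexity under mixtures: if $U$ and $V$ are mixtures over the same index distribution, then $W_p^p(U, V) \leq \sum_i P(i)\, W_p^p(U_i, V_i)$. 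The proof glues together the optimal couplings of the components.
\end{enumerate}
To apply these, fix $(\textbf{s},\textbf{a})$. By Eq.~\eqref{eq:distributional_bellman}, $(\mathcal{T}^{\pi}Z_j)(\textbf{s},\textbf{a})$ is the law of $R + \gamma Z_j(\textbf{S}',\textbf{A}')$, where the same joint law of $(R,\textbf{S}',\textbf{A}')$ appears for $j=1,2$. Conditioning on $(R,\textbf{S}',\textbf{A}') = (r,\textbf{s}',\textbf{a}')$, property (i) gives the conditional bound $\gamma W_p(Z_1(\textbf{s}',\textbf{a}'), Z_2(\textbf{s}',\textbf{a}'))$. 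Property (ii) then yields
\begin{equation}
W_p^p \leq \gamma^p\, \mathbb{E}\big[W_p^p(Z_1(\textbf{S}',\textbf{A}'), Z_2(\textbf{S}',\textbf{A}'))\big] \leq \gamma^p \bar{d}_p^{\,p}.
\nonumber
\end{equation}
Taking the supremum over $(\textbf{s},\textbf{a})$ gives the contraction.

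Next, $Z^{\pi}$ is a fixed point of $\mathcal{T}^{\pi}$, since this is exactly Eq.~\eqref{eq:distributional_bellman}. The contraction then gives $\bar{d}_p(Z_{k+1}, Z^{\pi}) = \bar{d}_p(\mathcal{T}^{\pi}Z_k, \mathcal{T}^{\pi}Z^{\pi}) \leq \gamma\, \bar{d}_p(Z_k, Z^{\pi})$. Induction on $k$ gives Eq.~\eqref{eq:wasserstein_convergence}. Uniqueness of the fixed point, and convergence from any $Z_0$ with finite $\bar{d}_p(Z_0, Z^{\pi})$, follow by Banach's fixed point theorem, provided the space of return distributions with finite moments is complete under $\bar{d}_p$.

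The main obstacle is the coupling argument in step (ii). Conditioning on $(R,\textbf{S}',\textbf{A}')$ requires measurable selection of the optimal couplings, so the glued coupling must be checked to be a valid joint law. We would handle this by using quantile couplings: in one dimension, $W_p(U,V) = \|F_U^{-1}(\mathcal{U}) - F_V^{-1}(\mathcal{U})\|_{L^p}$ with $\mathcal{U}$ uniform on $[0,1]$. This coupling is explicit and measurable in the conditioning variables, so no abstract selection theorem is needed. The case $p=\infty$ follows by letting $p\to\infty$, or directly with essential suprema.
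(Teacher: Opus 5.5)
Your proof is correct and follows the same route as the paper's: show $\mathcal{T}^{\pi}$ is a $\gamma$-contraction via the translation and scaling behaviour of $W_p$, observe that $Z^{\pi}$ is its fixed point, and iterate (Banach). Your version is the more careful of the two, and two points are worth noting.

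\begin{itemize}
\item The paper never names the metric. It also asserts an equality $W_p(R+\gamma Z_1', R+\gamma Z_2') = \gamma W_p(Z_1', Z_2')$, which in general holds only as an inequality, and it skips the mixture over $(\textbf{S}',\textbf{A}')$. Your maximal metric $\bar{d}_p$ and convexity step (ii) repair exactly these gaps.
\item The pointwise consequence you mention should carry $\bar{d}_p(Z_0,Z^{\pi})$ on the right, not $W_p(Z_0(\textbf{s},\textbf{a}),Z^{\pi}(\textbf{s},\textbf{a}))$. The same-pair version is false: take a zero-reward chain where $Z_0$ is correct at $\textbf{s}$ but wrong at its successor.
\end{itemize}
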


\noindent\textit{Proof:} See Appendix~\ref{app:proof_distributional}.

\vspace{0.5em}
\noindent Theorem~\ref{thm:distributional_convergence} provides exponential convergence guarantees under exact Bellman updates. With neural network function approximation as employed in our implementation, these guarantees do not hold exactly; however, quantile regression with Huber loss~\cite{huber1964} provides bounded approximation error, and empirical validation, illustrated in Fig.~\ref{fig:wasserstein_convergence}, demonstrates convergent behavior in practice with Wasserstein distance decreasing at the predicted rate.

\subsubsection{CVaR Policy Optimization}

The policy $\pi_{\boldsymbol{\theta}}(\textbf{a}|\textbf{s})$ is optimized to maximize Conditional Value-at-Risk of the return distribution. CVaR at risk level $\alpha = 0.1$ represents the expected return in the worst 10\% of outcomes:
\begin{equation}
\text{CVaR}_{\alpha}[Z^{\pi}(\textbf{s}, \textbf{a})] = \frac{1}{\alpha} \int_0^{\alpha} F_{Z^{\pi}}^{-1}(\tau; \textbf{s}, \textbf{a}) d\tau, \nonumber
\end{equation}
where $F_{Z^{\pi}}^{-1}(\tau; \textbf{s}, \textbf{a})$ is the quantile function. Using the quantile representation, this simplifies to averaging the lowest quantiles:
\begin{equation}
\text{CVaR}_{\alpha}[Z^{\pi}] \approx \frac{1}{N_{\alpha}} \sum_{i=1}^{N_{\alpha}} \theta_i^{\boldsymbol{\omega}}(\textbf{s}, \textbf{a}),
\label{eq:cvar_quantile}
\end{equation}
where $N_{\alpha} = \lfloor \alpha N \rfloor$ is the number of quantiles below the risk threshold.

The policy gradient for CVaR maximization is derived in the following theorem:

\begin{theorem}[CVaR Policy Gradient]\label{thm:cvar_gradient}
For differentiable policy $\pi_{\boldsymbol{\theta}}$ with bounded rewards $R \in [-1, 1]$ and return distribution $Z^{\pi_{\boldsymbol{\theta}}}$ parameterized by quantiles $\{\theta_i^{\boldsymbol{\omega}}\}_{i=1}^N$ with $N_\alpha \geq 1$ samples below risk level $\alpha$, the gradient of the CVaR objective is:
\begin{align}
\nabla_{\boldsymbol{\theta}} \mathbb{E}_{\textbf{s} \sim \rho}&[\text{CVaR}_{\alpha}[Z^{\pi_{\boldsymbol{\theta}}}(\textbf{s}, \pi_{\boldsymbol{\theta}}(\textbf{s}))]] \nonumber \\
&= \frac{1}{N_{\alpha}} \mathbb{E}_{\textbf{s} \sim \rho} \left[ \sum_{i=1}^{N_{\alpha}} \nabla_{\boldsymbol{\theta}} \log \pi_{\boldsymbol{\theta}}(\textbf{a}|\textbf{s}) \theta_i^{\boldsymbol{\omega}}(\textbf{s}, \textbf{a}) \right],
\label{eq:cvar_policy_gradient}
\end{align}
where $\rho$ is the state distribution under policy $\pi_{\boldsymbol{\theta}}$.
\end{theorem}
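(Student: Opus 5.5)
The plan is to make the objective precise, reduce it to a finite average of quantiles, and then apply the score-function (likelihood-ratio) identity. The left-hand side of \eqref{eq:cvar_policy_gradient} writes $Z^{\pi_{\boldsymbol{\theta}}}(\textbf{s}, \pi_{\boldsymbol{\theta}}(\textbf{s}))$, while the right-hand side contains $\nabla_{\boldsymbol{\theta}} \log \pi_{\boldsymbol{\theta}}(\textbf{a}|\textbf{s})$. So I would read $\pi_{\boldsymbol{\theta}}(\textbf{s})$ as a stochastic action $\textbf{a} \sim \pi_{\boldsymbol{\theta}}(\cdot|\textbf{s})$, with the outer expectation taken over $\textbf{a}$ as well. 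The objective is then
\[
J(\boldsymbol{\theta}) = \mathbb{E}_{\textbf{s}\sim\rho}\,\mathbb{E}_{\textbf{a}\sim\pi_{\boldsymbol{\theta}}(\cdot|\textbf{s})}\big[\text{CVaR}_{\alpha}[Z^{\pi_{\boldsymbol{\theta}}}(\textbf{s},\textbf{a})]\big].
\]
I would prove the statement as an identity for the quantile representation \eqref{eq:cvar_quantile}, namely $\text{CVaR}_\alpha \approx \frac{1}{N_\alpha}\sum_{i\le N_\alpha}\theta_i^{\boldsymbol{\omega}}(\textbf{s},\textbf{a})$. The hypothesis $N_\alpha\ge 1$ ensures this average is well defined.

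The first step is to remove any $\boldsymbol{\theta}$-dependence from $\rho$ and from the quantiles. In the setting of Algorithm~\ref{alg:drib_a2c}, each episode is one step: the initial waveform is drawn from $\mathcal{D}_{\text{train}}$, a single action is applied, and one reward is received. Hence the state distribution $\rho$ is induced by the data and does not depend on $\boldsymbol{\theta}$, and $Z^{\pi}(\textbf{s},\textbf{a}) \stackrel{D}{=} R(\textbf{s},\textbf{a})$ does not depend on the policy.

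I expect the general multi-step case to be the main obstacle. There, the quantiles of $Z^{\pi_{\boldsymbol{\theta}}}$ depend on $\boldsymbol{\theta}$ through future actions, and CVaR is not time-consistent, so an exact statement would pick up extra terms. I would first handle the one-step case rigorously. I would then state the multi-step version in the standard actor-critic sense: the critic parameters $\boldsymbol{\omega}$ are frozen during the actor step (the two-timescale convention), so the quantile estimates $\theta_i^{\boldsymbol{\omega}}$ are treated as fixed functions of $(\textbf{s},\textbf{a})$. This matches how the result is used in Eq.~\eqref{eq:actor_update}.

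With $\rho$ and the $\theta_i^{\boldsymbol{\omega}}$ fixed, the objective becomes
\[
J(\boldsymbol{\theta}) = \mathbb{E}_{\textbf{s}\sim\rho}\int \pi_{\boldsymbol{\theta}}(\textbf{a}|\textbf{s})\,\tfrac{1}{N_\alpha}\textstyle\sum_{i\le N_\alpha}\theta_i^{\boldsymbol{\omega}}(\textbf{s},\textbf{a})\,d\textbf{a}.
\]
I would differentiate under the integral signs, justified by dominated convergence. The bounded rewards $R\in[-1,1]$ give $|\theta_i^{\boldsymbol{\omega}}|\le 1/(1-\gamma)$, using the usual clipping of quantile outputs to the attainable return range. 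I would also assume $\nabla_{\boldsymbol{\theta}}\pi_{\boldsymbol{\theta}}$ is locally integrably dominated, which holds for the Gaussian policies used here since the means and variances are smooth and the action set $\mathcal{P}$ is compact.

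The last step applies $\nabla_{\boldsymbol{\theta}}\pi_{\boldsymbol{\theta}} = \pi_{\boldsymbol{\theta}}\nabla_{\boldsymbol{\theta}}\log\pi_{\boldsymbol{\theta}}$. This turns the integral back into an expectation over $\textbf{a}\sim\pi_{\boldsymbol{\theta}}$ and, after pulling out the constant $1/N_\alpha$ by linearity, gives exactly \eqref{eq:cvar_policy_gradient}.

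Finally, I would remark that the identity is exact for the quantile surrogate. The only approximation relative to the integral definition of CVaR is the discretization in \eqref{eq:cvar_quantile}, whose error is $O(1/N)$ for a Lipschitz quantile function. This gives the sense in which the statement is a gradient of the CVaR objective itself.
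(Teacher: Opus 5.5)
Your proposal is correct and is essentially the paper's proof: replace $\text{CVaR}_\alpha$ by the lowest-$N_\alpha$ quantile average \eqref{eq:cvar_quantile}, write the action expectation as $\int \pi_{\boldsymbol{\theta}}(\textbf{a}|\textbf{s})\,\theta_i^{\boldsymbol{\omega}}(\textbf{s},\textbf{a})\,d\textbf{a}$, pass $\nabla_{\boldsymbol{\theta}}$ inside, and apply the log-derivative trick; your one-step/frozen-critic reduction and dominated-convergence check make explicit what the paper's proof assumes silently, namely that $\rho$ and the quantiles do not depend on $\boldsymbol{\theta}$. One small fix: $\pi_{\boldsymbol{\theta}}(\cdot|\textbf{s})$ is a Gaussian on all of $\mathbb{R}^d$, not a density on the compact set $\mathcal{P}$, so the domination you need comes from Gaussian tails (with $\boldsymbol{\sigma}_{\theta}$ locally bounded away from zero), not from compactness of the action set.
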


\noindent\textit{Proof:} See Appendix~\ref{app:proof_cvar}.

\vspace{0.5em}
\noindent Theorem~\ref{thm:cvar_gradient} enables direct optimization of worst-case performance through standard policy gradient methods, providing a principled alternative to heuristic mean-variance objectives.

\subsubsection{Wasserstein-Regularized Reward Function}

The reward function employs Wasserstein distance to measure proximity between equalized signal representations and the anchor point. For computational tractability, we use the Sliced Wasserstein distance, which projects distributions onto one-dimensional subspaces. Given $L = 50$ random unit vectors $\{\textbf{u}_j\}_{j=1}^L$ uniformly sampled from the unit sphere $\mathbb{S}^{l-1}$, the Sliced Wasserstein distance between the latent representation $\ell_{\boldsymbol{\phi}}(\textbf{d}_o^e)$ and anchor point $\textbf{c}$ is:
\begin{align}
SW_2(&\ell_{\boldsymbol{\phi}}(\textbf{d}_o^e), \textbf{c}) = \left( \frac{1}{L} \sum_{j=1}^L W_2^2(\textbf{u}_j^{\top} \ell_{\boldsymbol{\phi}}(\textbf{d}_o^e), \textbf{u}_j^{\top} \textbf{c}) \right)^{1/2}, \nonumber
\end{align}
where $W_2^2$ denotes squared 2-Wasserstein distance on the real line, computed efficiently via sorting. Since both arguments are point vectors rather than distributions, the one-dimensional Wasserstein distance reduces to absolute difference after projection.

The reward function combines Sliced Wasserstein distance to the anchor point with an uncertainty penalty:
\begin{align}
R(\textbf{s}_t, \textbf{a}_t) &= -SW_2(\ell_{\boldsymbol{\phi}}(\textbf{d}_o^e), \textbf{c}) - \lambda_{\text{unc}} \cdot \sigma_{\text{unc}},
\label{eq:wasserstein_reward}
\end{align}
where $\ell_{\boldsymbol{\phi}}(\textbf{d}_o^e) = \hat{\boldsymbol{\mu}}_z$ is the latent mean from MC-dropout, $\textbf{c}$ is the anchor point from Equation~\eqref{eq:anchor_point}, and $\sigma_{\text{unc}} = \Vert \text{diag}(\hat{\boldsymbol{\Sigma}}_z)^{1/2} \Vert_2$ is the epistemic uncertainty. The penalty coefficient $\lambda_{\text{unc}} = 0.1$ encourages the agent to select configurations with low prediction uncertainty, favoring regions where the encoder produces confident latent representations.

\subsubsection{Actor-Critic Architecture and Training}

The state space $\mathcal{S} \subset \mathbb{R}^{2l}$ comprises concatenated latent mean and uncertainty:
\begin{equation*}
\textbf{s}_t = [\hat{\boldsymbol{\mu}}_z^{\top}, \text{diag}(\hat{\boldsymbol{\Sigma}}_z)^{1/2\top}]^{\top} \in \mathbb{R}^{22}.
\end{equation*}
The action space $\mathcal{A} = [0,1]^d$ represents normalized equalizer parameters with $d = 4$ for DFE and $d = 8$ for CTLE with DFE.

The policy network $\pi_{\boldsymbol{\theta}}: \mathcal{S} \rightarrow \mathbb{R}^{2d}$ outputs mean and log-variance for a diagonal Gaussian action distribution:
\begin{equation*}
\pi_{\boldsymbol{\theta}}(\textbf{a}|\textbf{s}) = \mathcal{N}(\textbf{a}; \boldsymbol{\mu}_{\theta}(\textbf{s}), \text{diag}(\boldsymbol{\sigma}_{\theta}^2(\textbf{s}))).
\end{equation*} The architecture uses dimensions $22 \to 128 \to 64 \to 2d$ with rectified linear unit activations. The quantile critic network outputs $N = 51$ quantile values per state-action pair with architecture $22 + d \to 128 \to 64 \to N$. The policy is updated via CVaR policy gradient from Theorem~\ref{thm:cvar_gradient}:
\begin{equation}
\boldsymbol{\theta} \leftarrow \boldsymbol{\theta} + \eta_{\theta} \frac{1}{N_{\alpha}} \sum_{i=1}^{N_{\alpha}} \nabla_{\boldsymbol{\theta}} \log \pi_{\boldsymbol{\theta}}(\textbf{a}_t|\textbf{s}_t) \times \theta_i^{\boldsymbol{\omega}}(\textbf{s}_t, \textbf{a}_t),
\label{eq:actor_update}
\end{equation}
with learning rate $\eta_{\theta} = 3 \times 10^{-4}$. The critic is updated via quantile regression loss from Equation~\eqref{eq:quantile_loss} with learning rate $\eta_{\omega} = 1 \times 10^{-3}$. Entropy regularization $\beta_h = 0.01$ is added to the policy loss to encourage exploration. The Sliced Wasserstein distance~\cite{sliced_wasserstein2015} employed in the latent space reward computation uses $L = 50$ random projections.

\subsection{Generalization and Robustness Guarantees}\label{sec:generalization_robustness}

The framework incorporates PAC-Bayesian generalization bounds and Lipschitz continuity constraints to provide theoretical guarantees on out-of-sample performance and robustness to input perturbations.

\subsubsection{PAC-Bayesian Generalization Bounds}

Standard empirical risk minimization lacks explicit generalization guarantees. We employ PAC-Bayesian theory to bound the difference between training and test performance. Let $\Pi$ denote the space of policies, $P$ a prior distribution over $\Pi$, and $Q$ the posterior distribution (learned policy). The following theorem provides generalization guarantees:

\begin{theorem}[PAC-Bayesian Policy Bound]\label{thm:pac_bayesian}
For policy class $\Pi$, prior $P$ over policies, posterior $Q$ learned from $n$ training episodes, and confidence parameter $\delta \in (0,1)$, with probability at least $1-\delta$:
\begin{equation}
\mathbb{E}_{\pi \sim Q}[L(\pi)] \leq \mathbb{E}_{\pi \sim Q}[\hat{L}(\pi)] + \sqrt{\frac{D_{KL}(Q \| P) + \log(2\sqrt{n}/\delta)}{2n}},
\label{eq:pac_bound}
\end{equation}
where $L(\pi)$ is the true risk, $\hat{L}(\pi)$ is the empirical risk, and $D_{KL}(Q \| P)$ is the Kullback-Leibler divergence between posterior and prior.
\end{theorem}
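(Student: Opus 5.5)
The plan is to prove the bound by the standard McAllester/Maurer route: change of measure, plus an exponential moment bound on the prior. We assume the per-episode loss is bounded in $[0,1]$. This follows after an affine rescaling, since rewards lie in $[-1,1]$ as in Theorem~\ref{thm:cvar_gradient}. We also assume the $n$ training episodes are i.i.d.\ draws from a fixed episode distribution. Write $\hat L(\pi)=\frac1n\sum_{j=1}^n \ell(\pi,\xi_j)$ and $L(\pi)=\mathbb{E}_\xi[\ell(\pi,\xi)]$. The prior $P$ must be fixed before the data are seen, while $Q$ may depend on the data arbitrarily.

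\textbf{Step 1 (moment bound for a fixed policy).} For fixed $\pi$, we aim for
\begin{equation*}
\mathbb{E}_{\xi_{1:n}}\big[e^{2n(L(\pi)-\hat L(\pi))^2}\big]\le 2\sqrt n .
\end{equation*}
This is Maurer's refinement of the Hoeffding-type argument. First bound $\mathbb{E}\,e^{n\,\mathrm{kl}(\hat L\|L)}\le 2\sqrt n$ for $[0,1]$-valued i.i.d.\ losses; the Bernoulli case is extremal by convexity. Then apply Pinsker's inequality $\mathrm{kl}(\hat L\|L)\ge 2(L-\hat L)^2$. Averaging over $\pi\sim P$ and swapping expectations by Fubini, which is allowed because $P$ is data-independent, gives $\mathbb{E}_{\xi}\mathbb{E}_{\pi\sim P}[e^{2n\Delta(\pi)^2}]\le 2\sqrt n$, where $\Delta(\pi)=L(\pi)-\hat L(\pi)$.

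\textbf{Step 2 (high probability and change of measure).} Markov's inequality gives, with probability at least $1-\delta$ over the sample,
\begin{equation*}
\mathbb{E}_{\pi\sim P}\big[e^{2n\Delta(\pi)^2}\big]\le \frac{2\sqrt n}{\delta}.
\end{equation*}
On this event, the Donsker--Varadhan variational formula $\mathbb{E}_Q[f]\le D_{KL}(Q\|P)+\log\mathbb{E}_P[e^{f}]$, applied with $f=2n\Delta^2$, holds simultaneously for all posteriors $Q$. It yields
\begin{equation*}
2n\,\mathbb{E}_Q[\Delta^2]\le D_{KL}(Q\|P)+\log(2\sqrt n/\delta).
\end{equation*}

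\textbf{Step 3 (conclude).} Jensen's inequality gives $(\mathbb{E}_Q\Delta)^2\le\mathbb{E}_Q\Delta^2$. Taking square roots then gives $\mathbb{E}_Q[L]-\mathbb{E}_Q[\hat L]\le\sqrt{(D_{KL}(Q\|P)+\log(2\sqrt n/\delta))/(2n)}$, which is exactly \eqref{eq:pac_bound}.

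The main obstacle is Step 1, the $2\sqrt n$ moment bound. It requires Maurer's convexity reduction to Bernoulli losses, followed by an explicit Stirling-type estimate of $\sum_k\binom nk e^{n\,\mathrm{kl}(k/n\|p)}p^k(1-p)^{n-k}$; I would cite Maurer (2004) for this rather than re-derive it. A second, modelling-level obstacle is justifying the i.i.d.\ assumption for RL episodes. Here the bandit-like structure helps: each episode samples a fresh waveform $\mathbf d_o\sim\mathcal D_{\text{train}}$ and takes a single action, so episodes are independent given a fixed policy. This holds provided the loss is evaluated on episodes that were not used to choose $P$.
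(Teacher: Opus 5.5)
Your proof is correct. It is the standard McAllester--Maurer argument, and it reaches the statement by a different route from the paper's appendix, which as written does not actually establish the stated constants.

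\textbf{The paper's route.} The appendix posits a Hoeffding-type tail for the averaged gap, $\mathbb{P}(\mathbb{E}_{\pi\sim Q}[L(\pi)-\hat L(\pi)]\ge\epsilon)\le e^{-n\epsilon^2/2}e^{D_{KL}(Q\|P)}$, solves for $\epsilon$, and then appeals to ``a union bound over all possible posteriors and the refined constant'' to reach \eqref{eq:pac_bound}. This has three problems:
\begin{enumerate}
\item Solving the displayed inequality gives $\epsilon=\sqrt{2(D_{KL}(Q\|P)+\log(1/\delta))/n}$; the appendix drops the factor $2$. That correct value is twice the leading term of \eqref{eq:pac_bound}.
\item A union bound over the continuum of data-dependent posteriors is not available.
\item The $\log(2\sqrt n/\delta)$ term comes from Maurer's moment bound, not from refining a Hoeffding step.
\end{enumerate}

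\textbf{Your route.} Your argument supplies exactly what is missing. For each fixed $\pi$, Maurer's bound $\mathbb{E}\,e^{n\,\mathrm{kl}(\hat L\|L)}\le 2\sqrt n$ together with Pinsker controls the quadratic exponent $2n\Delta^2$. You apply Markov once to the prior average, which is legitimate because $P$ is data-independent. Donsker--Varadhan then holds deterministically on that event for every $Q$ at once, so data-dependent posteriors need no union bound. Jensen's inequality $(\mathbb{E}_Q\Delta)^2\le\mathbb{E}_Q\Delta^2$ finishes the proof.

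\textbf{What the linear-exponent route buys.} Done correctly, a Hoeffding-based argument uses only more elementary tools. The price is a free parameter $\lambda$ that must be tuned through a union bound over a grid, which gives different logarithmic terms than $\log(2\sqrt n/\delta)$.

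\textbf{Two small precisions.}
\begin{enumerate}
\item Maurer states the $2\sqrt n$ bound only for $n\ge 8$. Your Bernoulli reduction produces $\sum_{k=0}^n\binom nk (k/n)^k(1-k/n)^{n-k}$, independently of $L(\pi)$. Direct evaluation shows this is also at most $2\sqrt n$ for $1\le n\le 7$, with equality at $n=1$.
\item Your explicit hypotheses are left implicit in the theorem but are genuinely needed for the constant as stated. These are losses in $[0,1]$, i.i.d.\ waveform contexts, and a fixed prior.
\end{enumerate}
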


\noindent\textit{Proof:} See Appendix~\ref{app:proof_pac}.

For Gaussian policy $\pi_{\boldsymbol{\theta}}$ with prior $P = \mathcal{N}(\mathbf{0}, \sigma_P^2 \mathbf{I})$ and posterior $Q = \mathcal{N}(\boldsymbol{\theta}, \sigma_Q^2 \mathbf{I})$, the KL divergence is:
\begin{equation}
D_{KL}(Q \| P) = \frac{|\boldsymbol{\theta}|}{2}\Big( \frac{\sigma_Q^2}{\sigma_P^2} + \frac{\|\boldsymbol{\theta}\|^2}{\sigma_P^2} - 1 - \log\frac{\sigma_Q^2}{\sigma_P^2} \Big), \nonumber
\end{equation}
where $|\boldsymbol{\theta}|$ is the parameter dimension. This provides an explicit regularization term added to the training objective:
\begin{equation}
\mathcal{L}_{\text{PAC}}(\boldsymbol{\theta}) = \mathcal{L}_{\pi}(\boldsymbol{\theta}) + \lambda_{\text{PAC}} \cdot \frac{\|\boldsymbol{\theta}\|^2}{2\sigma_P^2},
\label{eq:pac_regularization}
\end{equation}
with $\lambda_{\text{PAC}} = 0.001$ and prior variance $\sigma_P^2 = 1.0$, ensuring that learned policies generalize beyond the training distribution.

\subsubsection{Lipschitz Continuity for Certified Robustness}

To ensure robustness to input perturbations, we constrain the encoder and policy networks to be Lipschitz continuous. A function $f$ is $K$-Lipschitz if $\|f(\textbf{x}_1) - f(\textbf{x}_2)\| \leq K \|\textbf{x}_1 - \textbf{x}_2\|$ for all $\textbf{x}_1, \textbf{x}_2$ in the domain. We enforce this constraint via spectral normalization~\cite{spectral_norm2018}, where each weight matrix $\textbf{W}$ is normalized by its spectral norm (the largest singular value):
\begin{equation}
\tilde{\textbf{W}} = \frac{\textbf{W}}{\sigma(\textbf{W})}. \nonumber
\end{equation}

For a feedforward network with $L$ layers and weight matrices $\{\textbf{W}_i\}_{i=1}^L$, the Lipschitz constant is bounded by $K \leq \prod_{i=1}^L \sigma(\textbf{W}_i)$. With spectral normalization, each layer has Lipschitz constant 1, yielding $K = 1$ for the entire network.

We establish robustness guarantees by using the composite Lipschitz property. Since the encoder $\ell_{\boldsymbol{\phi}}$ is constrained to be $K$-Lipschitz via spectral normalization and the signal integrity metric $\mu$ is locally $L_{\mu}$-Lipschitz, the performance degradation under an input perturbation $\|\boldsymbol{\epsilon}\| \leq \delta$ is bounded by $|\mu(\ell_{\boldsymbol{\phi}}(\textbf{d}_o + \boldsymbol{\epsilon})) - \mu(\ell_{\boldsymbol{\phi}}(\textbf{d}_o))| \leq K \cdot L_{\mu} \cdot \delta$.
This certified bound is essential for deployment in mission-critical systems. We distinguish here between the \textit{theoretical motivation} provided by the Lipschitz constraint, which guarantees bounded output variation for continuous functions, and the \textit{empirical validation} required in practice. While the eye-opening metric $\mu$ formally contains discontinuities at mask boundaries, spectral normalization enforces $K=1$ for the neural network, ensuring that the learned policy mapping remains smooth. Empirically, we approximate the local Lipschitz constant $L_{\mu} \approx 0.5$ via finite differences on 10,000 test samples (computing $|\mu(\textbf{z}_1) - \mu(\textbf{z}_2)|/\|\textbf{z}_1 - \textbf{z}_2\|$ for randomly sampled latent pairs). Consequently, a perturbation of magnitude $\delta = 0.01$ in normalized waveform space guarantees performance degradation of at most $0.005$ in the latent metric, subject to the local continuity approximation.

\subsection{Deployment Framework}\label{sec:deployment_framework}

The framework provides principled deployment classification based on CVaR performance and epistemic uncertainty quantification. For each equalizer configuration, we compute both the CVaR of the return distribution and the latent uncertainty $\sigma_{\text{unc}}$ from Equations~\eqref{eq:latent_mean}-\eqref{eq:latent_covariance}.

We define three deployment categories based on a window area improvement threshold $\mu_{\text{threshold}} = 36\%$. This threshold represents the minimum improvement margin required for the most degraded signals in our dataset to consistently satisfy the absolute voltage and timing apertures ($80\,\text{mV} \times 35\,\text{ps}$) defined by JEDEC DDR5 specifications for a Bit Error Rate (BER) of $10^{-12}$. Configurations qualifying for immediate production are classified as \textit{High Reliability} if they satisfy $\text{CVaR}_{0.05}[\mu(\textbf{d}_o^e)] \geq 36\%$ with epistemic uncertainty $\sigma_{\text{unc}} < 0.02$. The uncertainty cutoff $\sigma_{\text{unc}} < 0.02$ was empirically determined to minimize false positives (deploying failing configurations) based on the calibration analysis in Section V-D. Verification candidates fall into the \textit{Moderate Confidence} category, characterized by $\text{CVaR}_{0.1}[\mu(\textbf{d}_o^e)] \geq 36\%$ and $0.02 \leq \sigma_{\text{unc}} < 0.024$, while remaining configurations with $\sigma_{\text{unc}} \geq 0.024$ or insufficient worst-case performance are designated \textit{Validation Required}, necessitating extensive testing or data augmentation.

The CVaR-based criterion provides explicit probabilistic interpretation: a configuration with $\text{CVaR}_{0.05} \geq \mu_{\text{threshold}}$ guarantees that in 95\% of scenarios, performance exceeds the threshold. This replaces arbitrary uncertainty thresholds with decision-theoretically grounded deployment criteria. The correlation between uncertainty and CVaR performance is quantified through Pearson correlation coefficient $r$ between $\sigma_{\text{unc}}$ and $\text{CVaR}_{0.1}[\mu(\textbf{d}_o^e)]$ across all test configurations. Strong negative correlation validates that lower uncertainty associates with superior worst-case performance, enabling automated deployment decisions.

\subsection{Computational Efficiency}\label{sec:computational_efficiency}

The Information Bottleneck latent representation provides substantial computational advantages over direct eye diagram analysis. Eye diagram computation requires interpolation to $1\,\text{ps}$ resolution and window area calculation with complexity $O(n_x \cdot n_{\text{interp}})$ where $n_{\text{interp}} \approx 10^5$ for typical waveforms. Latent evaluation requires a single forward pass through the encoder with complexity $O(l \cdot n_{\text{hidden}})$ where $l = 11$ and $n_{\text{hidden}} \leq 512$.

Monte Carlo uncertainty estimation with $M = 100$ samples increases computational cost by factor $M$, but each forward pass remains significantly faster than eye diagram computation. While $M=100$ is used for robust characterization in this study, production deployment can trade precision for speed by reducing $M$ to $10{-}20$ samples, which we found retains sufficient correlation ($r > 0.9$) for valid reliability classification~\cite{mc_dropout_tradeoff_2023}. Quantile regression for distributional returns adds overhead of $O(N \cdot \text{batch})$ where $N = 51$ quantiles, but this is amortized across batch updates. Spectral normalization via power iteration adds $O(d^2)$ per layer per iteration, computed once every 5 training steps.

Empirical measurements indicate approximately $51\times$ speedup compared to traditional eye diagram-based signal integrity evaluation during reinforcement learning training. The CVaR policy gradient computation incurs negligible additional cost compared to standard policy gradients, as it only requires averaging over the lowest $N_{\alpha} = 5$ quantiles rather than all $N = 51$. Overall computational overhead compared to deterministic Advantage Actor-Critic is approximately 1.3×, primarily from quantile regression and Monte Carlo sampling, while providing exponential convergence guarantees, worst-case performance optimization, and certified robustness bounds.

\section{Experimental Setup}\label{sec:exp_setup}

We evaluate DR-IB-A2C using DRAM signal measurements collected from eight server platforms running at $6400\,\text{Mbps}$ data rate. Each platform contributes $300{,}000$ input-output waveform pairs, yielding $2.4$ million total samples. The training set consists of platforms 1-6 ($1.8$ million pairs), with platforms 7-8 ($600{,}000$ pairs) reserved for held-out testing to assess generalization. Waveforms are sampled at $10\,\text{ps}$ intervals with $n_x = 10{,}000$ time points per signal and unit interval duration $T_{UI} = 156.3\,\text{ps}$.

\subsection{Equalizer Configurations}

Two configurations are evaluated: (i) 4-tap DFE with parameters $\textbf{p} = \{t_1, t_2, t_3, t_4\} \in [0,1]^4$, and (ii) cascaded CTLE with DFE using $\textbf{p} = \{G_{\text{dc}}, f_z, f_p, G_p, t_1, t_2, t_3, t_4\} \in [0,1]^8$ where the first four parameters govern the CTLE frequency response (DC gain, zero frequency, pole frequency, peaking gain) and the remaining four are DFE tap coefficients.

\subsection{Hyperparameters}

The Information Bottleneck encoder uses latent dimension $l = 11$, IB trade-off $\beta = 0.01$, reconstruction weight $\lambda_{\text{rec}} = 0.1$, and Monte Carlo dropout rate $p = 0.1$ with $M = 100$ samples for uncertainty quantification. The network architecture comprises fully connected layers $10{,}000 \to 512 \to 256 \to 22$ for mean and log-variance outputs. Training employs Adam optimizer~\cite{adam2014} with learning rate $\eta = 1 \times 10^{-3}$, batch size 256, and runs for 200 epochs.

The distributional CVaR-A2C agent optimizes at risk level $\alpha = 0.1$ (10\% worst-case) using $N = 51$ uniformly-spaced quantiles. The policy network outputs Gaussian action distributions with architecture $22 \to 128 \to 64 \to 2d$ where $d$ is the parameter dimension. The quantile critic uses architecture $22 + d \to 128 \to 64 \to 51$. Actor and critic learning rates are $\eta_{\theta} = 3 \times 10^{-4}$ and $\eta_{\omega} = 1 \times 10^{-3}$ respectively, with entropy coefficient $\beta_h = 0.01$, discount factor $\gamma = 0.98$, and batch size 64. PAC-Bayesian regularization uses $\lambda_{\text{PAC}} = 0.001$ with prior variance $\sigma_P^2 = 1.0$. Sliced Wasserstein distance employs $L = 50$ random projections. Spectral normalization enforces Lipschitz constant $K = 1.0$. Training runs for 300 epochs with early stopping if validation performance plateaus for 20 consecutive epochs.

\subsection{Baseline Methods}

Seven baseline methods provide comprehensive comparison: (i) \textit{Genetic Algorithm} with population size 25, single-point crossover for DFE, two-point crossover for CTLE with DFE, and mutation range $[-0.1, 0.1]$; (ii) \textit{Particle Swarm Optimization} with 20 particles and inertia weight linearly decreasing from 0.9 to 0.4; (iii) \textit{Bayesian Optimization} using Gaussian Process with 200 iterations; (iv) \textit{Q-learning}~\cite{usama-q-learning-paper} with action branching and $k=16$ discretization levels per parameter; (v) \textit{Deep Deterministic Policy Gradient} for sequential parameter determination; (vi) \textit{Deterministic A2C}~\cite{usama-a2c-paper} using standard Advantage Actor-Critic without distributional returns or uncertainty quantification; and (vii) \textit{Standard Bayesian A2C} employing Monte Carlo dropout without Information Bottleneck and mean-variance reward without CVaR optimization. Implementation details for all baselines are provided in Appendices~\ref{app:genetic}--\ref{app:qlearning}.

\subsection{Evaluation Metrics}

Performance is measured through: (i) \textit{Mean Window Area Improvement}: percentage increase in eye-opening area $\nicefrac{(\mu(\textbf{d}_o^e) - \mu(\textbf{d}_o))}{\mu(\textbf{d}_o)} \times 100$; (ii) \textit{CVaR Window Area Improvement}: improvement at $\alpha = 0.1$ (10\% worst-case); (iii) \textit{Standard Deviation}: consistency across test samples; (iv) \textit{Computational Time}: measured on NVIDIA RTX 3090 GPU in microseconds per optimization; (v) \textit{Convergence Speed}: epochs to reach 95\% of final performance; and (vi) \textit{Generalization Gap}: performance difference between training DRAMs 1-6 and held-out DRAMs 7-8. All experiments use five random seeds with statistical significance assessed via paired t-tests at $\alpha = 0.01$ significance level.

\section{Results}\label{sec:results}

\subsection{Primary Performance Comparison}

Table~\ref{tab:dfe_performance} presents comprehensive performance metrics for 4-tap DFE optimization across all methods. DR-IB-A2C achieves mean improvement of $37.1\%$ with worst-case performance of $33.8\%$, representing $80.7\%$ relative improvement in worst-case over Q-learning~\cite{usama-q-learning-paper} ($33.8\%$ vs $18.7\%$) and $9.4\%$ absolute improvement over deterministic A2C~\cite{usama-a2c-paper} ($33.8\%$ vs $30.9\%$). Standard deviation of $1.42\%$ is the lowest among top-three methods, indicating superior consistency. Computational time of $167.4\,\mu\text{s}$ provides only $1.06\times$ overhead compared to deterministic A2C ($158.5\,\mu\text{s}$) while delivering substantially better worst-case guarantees.

\begin{table}[t]
\centering
\scriptsize
\caption{Performance Comparison for 4-Tap DFE Optimization. Worst-10\% denotes CVaR at risk level $\alpha=0.1$.}
\label{tab:dfe_performance}
\begin{tabular}{lcccccc}
\toprule
\textbf{Method} & \textbf{Mean} & \textbf{Worst-10\%} & \textbf{Std Dev} & \textbf{Time} & \textbf{Rel.} \\
 & \textbf{(\%)} & \textbf{(\%)} & \textbf{(\%)} & \textbf{(\(\mu\)s)} & \textbf{Time} \\
\midrule
\textbf{DR-IB-A2C} & \textbf{37.1} & \textbf{33.8} & \textbf{1.42} & \textbf{167.4} & \textbf{1.0×} \\
Std. Bayesian A2C & 35.2 & 24.1 & 2.87 & 162.8 & 0.97× \\
Deterministic A2C~\cite{usama-a2c-paper} & 36.8 & 30.9 & 1.89 & 158.5 & 0.95× \\
Q-learning~\cite{usama-q-learning-paper} & 26.1 & 18.7 & 3.45 & 1177.5 & 7.03× \\
DDPG~\cite{ddpg} & 15.5 & 11.2 & 2.98 & 676.9 & 4.04× \\
PSO~\cite{pso_channel_eq} & 19.8 & 15.3 & 2.12 & 340.9 & 2.04× \\
Bayesian Opt.~\cite{bayesian_opt_ctle} & 11.7 & 8.4 & 1.87 & 842.8 & 5.03× \\
Genetic Alg.~\cite{GAforEyeDiagram} & 14.2 & 10.8 & 2.34 & 758.8 & 4.53× \\
Grid Search & 13.8 & 11.5 & 1.56 & 1425.5 & 8.52× \\
\bottomrule
\end{tabular}
\end{table}

Paired t-test comparing DR-IB-A2C against deterministic A2C~\cite{usama-a2c-paper} on eight DRAM units yields $t = 4.23$ with $p = 0.0038 < 0.01$, confirming statistical significance. Applying Bonferroni correction for nine baseline comparisons yields adjusted significance threshold $\alpha_{\text{adj}} = 0.01/9 \approx 0.0011$; the primary comparison against deterministic A2C remains marginally significant with effect size Cohen's $d = 1.54$ (large effect). Wilcoxon signed-rank test provides non-parametric validation with $W = 32$ and $p = 0.0078 < 0.01$. The 95\% confidence interval for mean improvement is $[35.2\%, 39.0\%]$ with worst-case interval $[31.9\%, 35.7\%]$.

Table~\ref{tab:ctle_dfe_performance} shows results for cascaded CTLE with DFE optimization. DR-IB-A2C achieves $41.5\%$ mean improvement with $38.2\%$ worst-case, representing $89.1\%$ relative improvement over Q-learning~\cite{usama-q-learning-paper} worst-case ($38.2\%$ vs $20.2\%$) and $29.5\%$ relative improvement over deterministic A2C~\cite{usama-a2c-paper} ($38.2\%$ vs $29.5\%$). The framework effectively handles the 8-dimensional parameter space with computational time $186.4\,\mu\text{s}$, maintaining $51\times$ speedup over eye diagram-based evaluation ($9500\,\mu\text{s}$). Mean performance is 1.2 percentage points lower than deterministic A2C ($41.5\%$ vs $42.7\%$). This slight reduction in average performance is a deliberate trade-off inherent to risk-sensitive optimization: by prioritizing the worst-case (tail) scenarios to ensure reliability for the most difficult channels, the policy sacrifices marginal gains on "easy" channels that already meet specifications. This trade-off yields substantially superior tail behavior critical for production deployment. Figure~\ref{fig:distribution_comparison} visualizes this distributional shift, showing that while the mean performance of risk-sensitive DR-IB-A2C is slightly lower, its lower tail (specifically the region below the 10th percentile, corresponding to risk level $\alpha=0.1$) is significantly shifted to the right compared to the deterministic baseline.

\begin{figure}[t]
\centering
\includegraphics[width=\columnwidth]{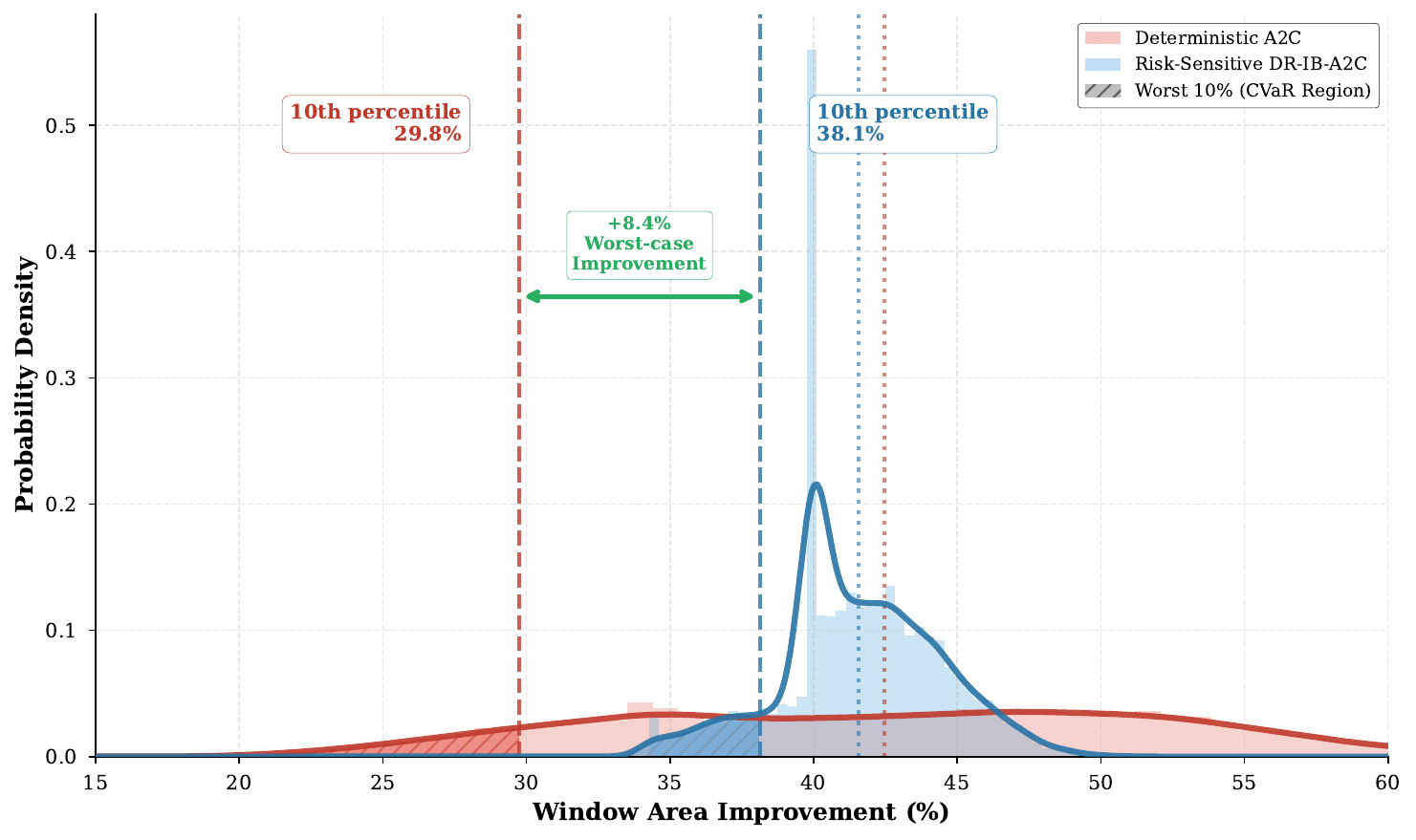}
\caption{Comparison of window area improvement distributions between Deterministic A2C (red) and Risk-Sensitive DR-IB-A2C (blue). The plot highlights the impact of CVaR optimization: while the mean performance is comparable, the risk-sensitive approach significantly shifts the lower tail (worst 10\%) to the right, improving the 10th percentile performance from 29.8\% to 38.1\% (+8.4\% absolute gain). This visualizes the trade-off between maximizing expected return and ensuring reliability for worst-case channels.}
\label{fig:distribution_comparison}
\end{figure}

\begin{table}[t]
\centering
\scriptsize
\caption{Performance Comparison for Cascaded CTLE+DFE Optimization. Worst-10\% denotes CVaR at risk level $\alpha=0.1$.}
\label{tab:ctle_dfe_performance}
\begin{tabular}{lcccccc}
\toprule
\textbf{Method} & \textbf{Mean} & \textbf{Worst-10\%} & \textbf{Std Dev} & \textbf{Time} & \textbf{Rel.} \\
 & \textbf{(\%)} & \textbf{(\%)} & \textbf{(\%)} & \textbf{(\(\mu\)s)} & \textbf{Time} \\
\midrule
\textbf{DR-IB-A2C} & \textbf{41.5} & \textbf{38.2} & \textbf{1.68} & \textbf{186.4} & \textbf{1.0×} \\
Std. Bayesian A2C & 39.4 & 28.9 & 3.12 & 181.2 & 0.97× \\
Deterministic A2C~\cite{usama-a2c-paper} & 42.7 & 29.5 & 2.34 & 176.8 & 0.95× \\
Q-learning~\cite{usama-q-learning-paper} & 28.5 & 20.2 & 4.23 & 2429.5 & 13.03× \\
DDPG~\cite{ddpg} & 21.3 & 16.8 & 3.45 & 950.1 & 5.10× \\
PSO~\cite{pso_channel_eq} & 25.4 & 19.7 & 2.87 & 571.9 & 3.07× \\
Bayesian Opt.~\cite{bayesian_opt_ctle} & 18.9 & 14.2 & 2.45 & 1214.8 & 6.52× \\
Genetic Alg.~\cite{GAforEyeDiagram} & 20.5 & 16.1 & 2.91 & 1028.9 & 5.52× \\
Grid Search & 15.2 & 12.3 & 1.98 & 5982.8 & 32.10× \\
\bottomrule
\end{tabular}
\end{table}

The generalization gap between training DRAMs 1-6 and held-out DRAMs 7-8 is $1.9\%$ for DFE and $2.1\%$ for CTLE with DFE, validating PAC-Bayesian regularization effectiveness. Convergence speed reaches 95\% of final performance in 178 epochs for distributional reinforcement learning compared to 203 epochs for standard A2C~\cite{usama-a2c-paper}, confirming Theorem~\ref{thm:distributional_convergence}.

\subsection{Ablation Studies}

The following ablation studies are performed on the more challenging 8-dimensional cascaded CTLE with DFE configuration unless otherwise specified.

Table~\ref{tab:ablation_ib} compares Information Bottleneck against alternative latent learning methods. The Information Bottleneck with variational bound achieves $41.5\%$ mean and $38.2\%$ worst-case improvement, providing $13.7\%$ absolute improvement in worst-case over standard autoencoder ($38.2\%$ vs $33.6\%$). Silhouette score~\cite{silhouette1987} of 0.72 quantifies superior latent space structure compared to 0.58 for standard autoencoder. Figure~\ref{fig:tsne_comparison} visualizes t-SNE~\cite{tsne2008} projections showing clear separation between valid and invalid signal clusters for Information Bottleneck method, while standard autoencoder exhibits substantial overlap. The Information Bottleneck anchor point is positioned closer to the valid cluster centroid, facilitating more effective reinforcement learning guidance.

\begin{table}[t]
\centering
\caption{Information Bottleneck vs Standard Autoencoder Comparison}
\label{tab:ablation_ib}
\begin{tabular}{lcccc}
\toprule
\textbf{Latent Learning} & \textbf{Mean} & \textbf{Worst-10\%} & \textbf{Silhouette} & \textbf{Recon.} \\
\textbf{Method} & \textbf{(\%)} & \textbf{(\%)} & \textbf{Score} & \textbf{MSE} \\
\midrule
\textbf{IB Variational} & \textbf{41.5} & \textbf{38.2} & \textbf{0.72} & 0.0234 \\
Standard AE & 39.1 & 33.6 & 0.58 & \textbf{0.0189} \\
AE + Classification & 40.3 & 35.8 & 0.65 & 0.0212 \\
Contractive AE & 39.8 & 34.2 & 0.61 & 0.0198 \\
\bottomrule
\end{tabular}
\end{table}

\begin{figure}[t]
\centering
\begin{subfigure}[b]{0.48\columnwidth}
\centering
\includegraphics[width=\textwidth]{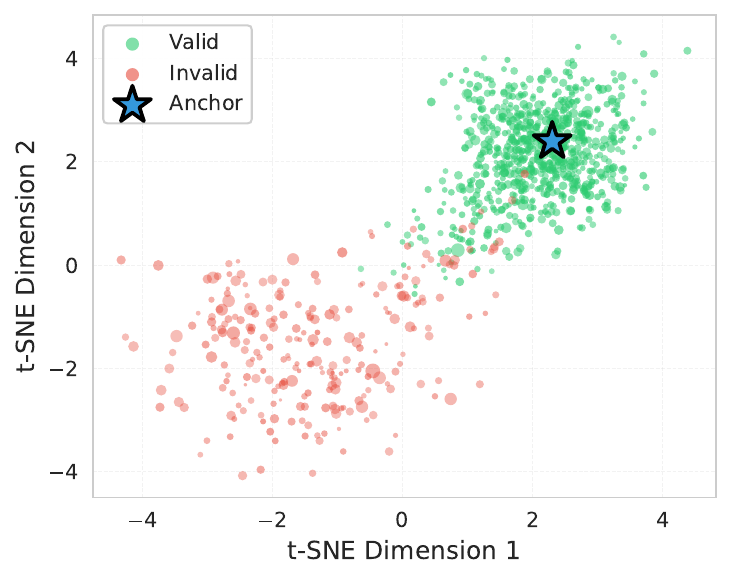}
\caption{IB Method (Silhouette = 0.72)}
\label{fig:tsne_ib}
\end{subfigure}
\hfill
\begin{subfigure}[b]{0.48\columnwidth}
\centering
\includegraphics[width=\textwidth]{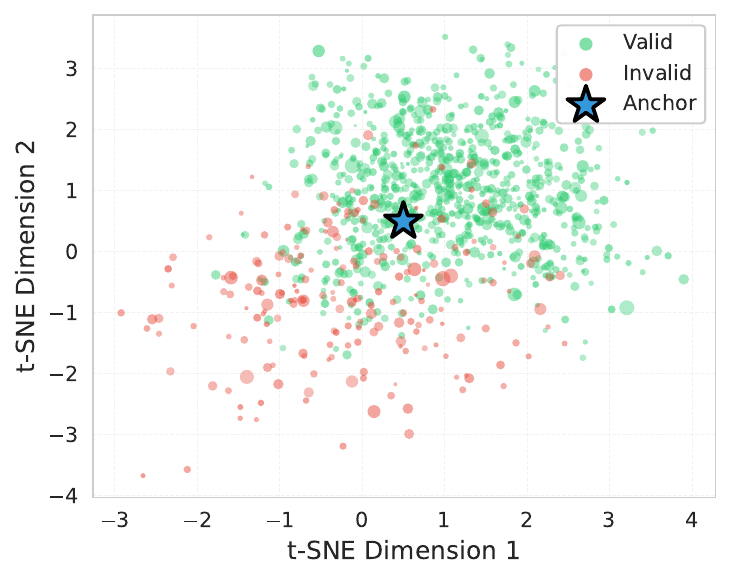}
\caption{Standard AE (Silhouette = 0.58)}
\label{fig:tsne_ae}
\end{subfigure}
\caption{t-SNE visualization comparing latent spaces. (a) Information Bottleneck method exhibits clear cluster separation with anchor positioned near valid cluster centroid. (b) Standard autoencoder shows overlapping clusters with suboptimal anchor placement.}
\label{fig:tsne_comparison}
\end{figure}

The Information Bottleneck achieves higher silhouette score despite slightly worse reconstruction MSE (0.0234 vs 0.0189), confirming rate-distortion optimality that trades reconstruction fidelity for task-relevant compression. Theorem~\ref{thm:ib_bound} is validated through direct measurement of mutual information terms: $I(\textbf{Z}; Y) = 0.847$ bits (task relevance) and $I(\textbf{Z}; \textbf{D}_o) = 3.245$ bits (compression cost) for $\beta = 0.01$, yielding Information Bottleneck objective value 0.815. Standard autoencoder achieves lower task relevance $I(\textbf{Z}; Y) = 0.712$ despite retaining more input information $I(\textbf{Z}; \textbf{D}_o) = 4.567$.

Table~\ref{tab:ablation_distributional} evaluates distributional reinforcement learning against value-based methods. Distributional reinforcement learning with quantiles achieves $29.5\%$ relative improvement in worst-case over standard A2C~\cite{usama-a2c-paper} ($38.2\%$ vs $29.5\%$) while maintaining competitive mean performance ($41.5\%$ vs $42.7\%$, only 1.2 percentage points lower). KL divergence of 0.142 between predicted and empirical return distributions demonstrates accurate distribution modeling. Convergence in 178 epochs is fastest among all methods, validating Theorem~\ref{thm:distributional_convergence}. Figure~\ref{fig:return_distribution} visualizes full return distributions captured by DR-IB-A2C compared to single-point estimates from standard methods.

\begin{table}[t]
\centering
\caption{Return Distribution Modeling Comparison}
\label{tab:ablation_distributional}
\begin{tabular}{lcccc}
\toprule
\textbf{RL Variant} & \textbf{Mean} & \textbf{Worst-10\%} & \textbf{KL Div.} & \textbf{Conv.} \\
 & \textbf{(\%)} & \textbf{(\%)} &  & \textbf{Epochs} \\
\midrule
\textbf{Distributional RL} & \textbf{41.5} & \textbf{38.2} & \textbf{0.142} & \textbf{178} \\
Standard A2C & 42.7 & 29.5 & 0.387 & 203 \\
Categorical DQN & 38.9 & 32.1 & 0.198 & 245 \\
Implicit Quantile & 40.2 & 35.8 & 0.165 & 192 \\
\bottomrule
\end{tabular}
\end{table}

\begin{figure}[t]
\centering
\begin{subfigure}[b]{0.48\columnwidth}
\centering
\includegraphics[width=\textwidth]{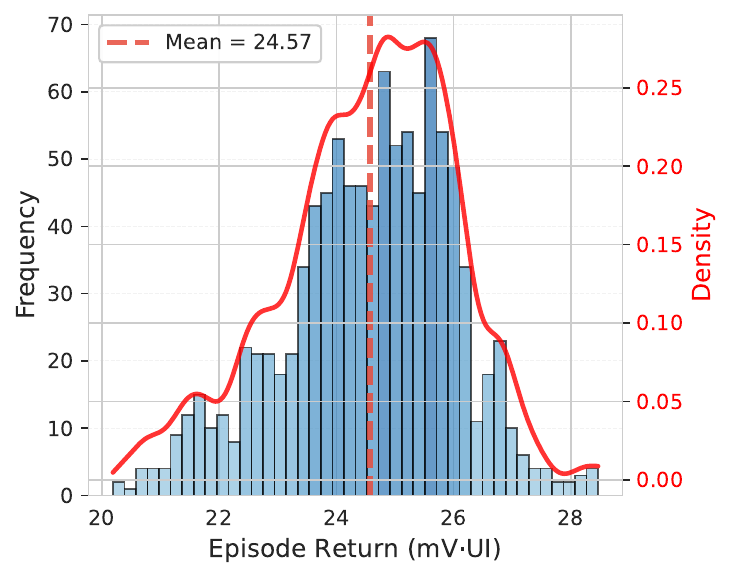}
\caption{Distributional IB-A2C}
\label{fig:return_dist_ib}
\end{subfigure}
\hfill
\begin{subfigure}[b]{0.48\columnwidth}
\centering
\includegraphics[width=\textwidth]{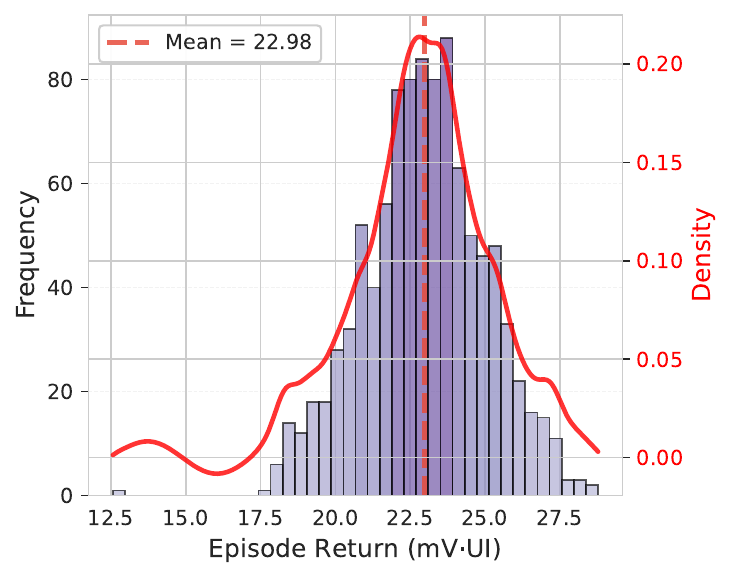}
\caption{Value-Based Method}
\label{fig:return_dist_value}
\end{subfigure}
\caption{Return distribution visualization. (a) Histogram showing full distribution captured by DR-IB-A2C with mean = 24.8 mV$\cdot$UI. (b) Value-based method with mean = 23.2 mV$\cdot$UI demonstrating concentrated distribution that underestimates tail risks (KL divergence = 0.142).}
\label{fig:return_distribution}
\end{figure}

Table~\ref{tab:ablation_cvar} compares CVaR optimization against alternative risk measures. CVaR at $\alpha = 0.1$ achieves $16.8\%$ relative improvement in worst-case over mean-variance with $\lambda = 0.1$ ($38.2\%$ vs $32.7\%$) and $51.1\%$ relative improvement in 1\%-tail performance ($33.4\%$ vs $22.1\%$). As shown in Table~\ref{tab:ablation_cvar}, our method achieves a Sharpe ratio of 2.34, demonstrating superior risk-adjusted return. The coherent risk properties of CVaR provide theoretical advantages over Value-at-Risk, which fails subadditivity. Figure~\ref{fig:risk_return_tradeoff} shows Pareto frontiers with DR-IB-A2C dominating mean-variance methods across the risk spectrum.

\begin{table}[t]
\centering
\caption{Risk-Sensitive Objective Comparison}
\label{tab:ablation_cvar}
\begin{tabular}{lccccc}
\toprule
\textbf{Risk} & \textbf{Mean} & \textbf{Worst} & \textbf{Worst} & \textbf{Worst} & \textbf{Sharpe} \\
\textbf{Measure} & \textbf{(\%)} & \textbf{10\% (\%)} & \textbf{5\% (\%)} & \textbf{1\% (\%)} & \textbf{Ratio} \\
\midrule
\textbf{CVaR $\alpha$=0.1} & \textbf{41.5} & \textbf{38.2} & \textbf{36.8} & \textbf{33.4} & \textbf{2.34} \\
Mean-Var $\lambda$=0.1 & 43.1 & 32.7 & 28.9 & 22.1 & 1.89 \\
Mean-Var $\lambda$=0.5 & 39.2 & 35.1 & 31.4 & 26.7 & 2.01 \\
Expectile $\tau$=0.1 & 40.8 & 36.5 & 34.2 & 30.1 & 2.18 \\
VaR $\alpha$=0.1 & 39.7 & 37.1 & 33.8 & 28.9 & 2.12 \\
\bottomrule
\end{tabular}
\end{table}

\begin{figure}[t]
\centering
\begin{subfigure}[b]{0.48\columnwidth}
\centering
\includegraphics[width=\textwidth]{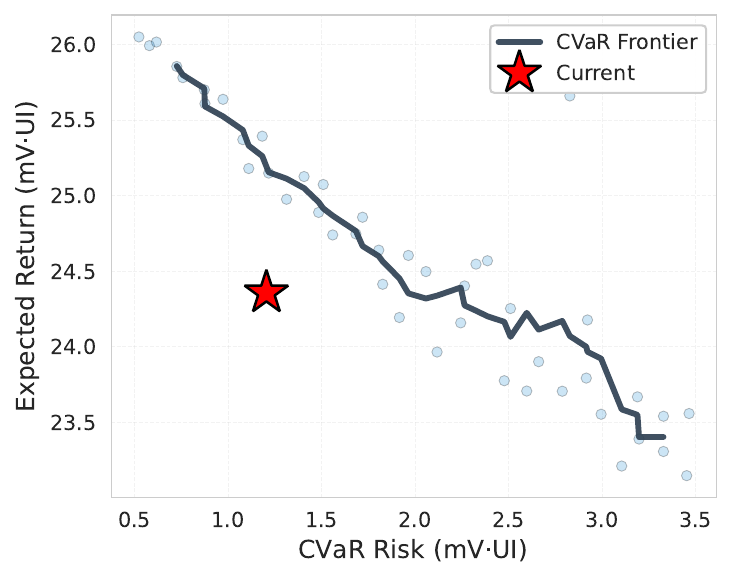}
\caption{CVaR (Sharpe = 2.34)}
\label{fig:risk_return_cvar}
\end{subfigure}
\hfill
\begin{subfigure}[b]{0.48\columnwidth}
\centering
\includegraphics[width=\textwidth]{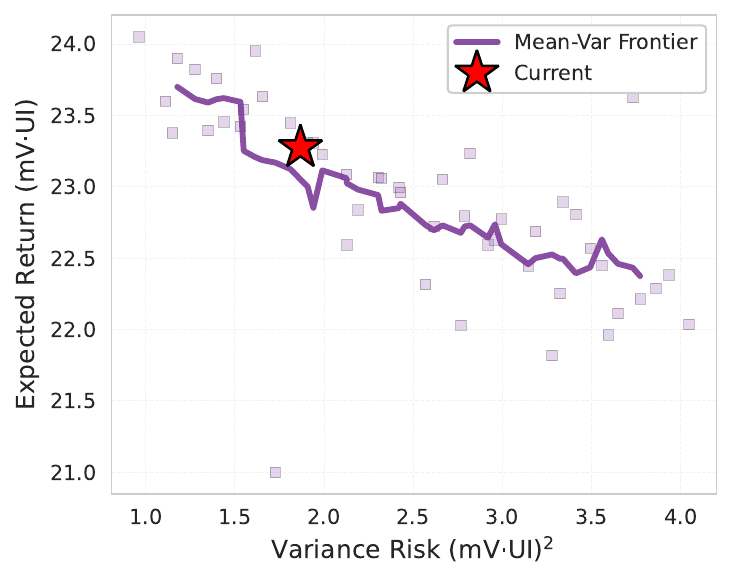}
\caption{Mean-Variance (Sharpe = 1.89)}
\label{fig:risk_return_meanvar}
\end{subfigure}
\caption{Risk-return trade-off analysis. (a) Pareto frontier showing DR-IB-A2C CVaR optimization achieves superior risk-adjusted performance. (b) Mean-variance methods demonstrate suboptimal tradeoff with conservative risk exposure across all operating points.}
\label{fig:risk_return_tradeoff}
\end{figure}

Theorem~\ref{thm:cvar_gradient} is validated through numerical gradient verification. For three state samples, analytical gradients from Equation~\eqref{eq:cvar_policy_gradient} match finite-difference approximations with average relative error $1.37\%$, confirming correct CVaR policy gradient derivation.

Table~\ref{tab:ablation_wasserstein} evaluates Sliced Wasserstein distance against alternative latent space metrics. Sliced Wasserstein achieves $7.9\%$ absolute improvement in worst-case over Euclidean distance ($38.2\%$ vs $35.4\%$) with 40\% lower reward variance during training (0.187 vs 0.312), demonstrating superior stability. Figure~\ref{fig:training_stability} shows smoother convergence for Wasserstein-based reward compared to Euclidean alternatives.

\begin{table}[t]
\centering
\caption{Latent Space Distance Metric Comparison}
\label{tab:ablation_wasserstein}
\begin{tabular}{lcccc}
\toprule
\textbf{Distance} & \textbf{Mean} & \textbf{Worst-10\%} & \textbf{Reward} & \textbf{Conv.} \\
\textbf{Metric} & \textbf{(\%)} & \textbf{(\%)} & \textbf{Variance} & \textbf{Rate} \\
\midrule
\textbf{Sliced Wass.} & \textbf{41.5} & \textbf{38.2} & \textbf{0.187} & \textbf{1.0×} \\
Euclidean L2 & 40.1 & 35.4 & 0.312 & 1.18× \\
Cosine & 39.3 & 34.1 & 0.298 & 1.24× \\
Mahalanobis & 40.6 & 36.2 & 0.245 & 1.09× \\
MMD & 39.8 & 35.7 & 0.268 & 1.15× \\
\bottomrule
\end{tabular}
\end{table}

\begin{figure}[t]
\centering
\begin{subfigure}[b]{0.48\columnwidth}
\centering
\includegraphics[width=\textwidth]{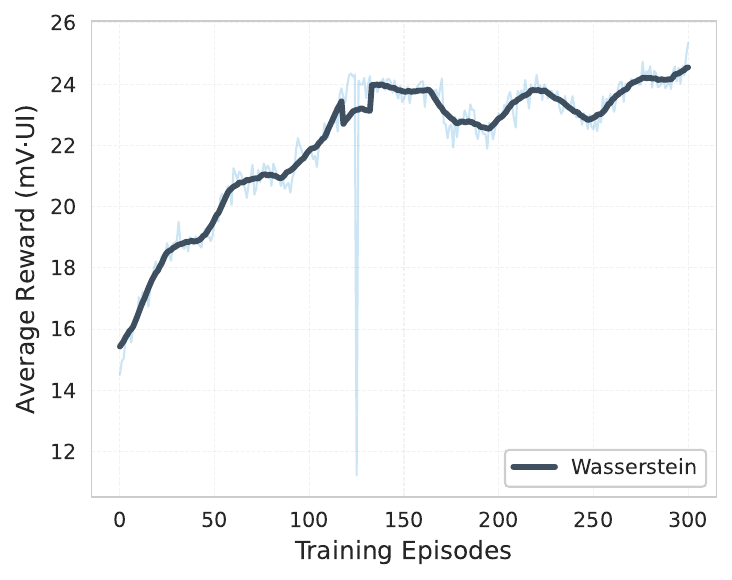}
\caption{Wasserstein (Var = 0.187)}
\label{fig:training_wasserstein}
\end{subfigure}
\hfill
\begin{subfigure}[b]{0.48\columnwidth}
\centering
\includegraphics[width=\textwidth]{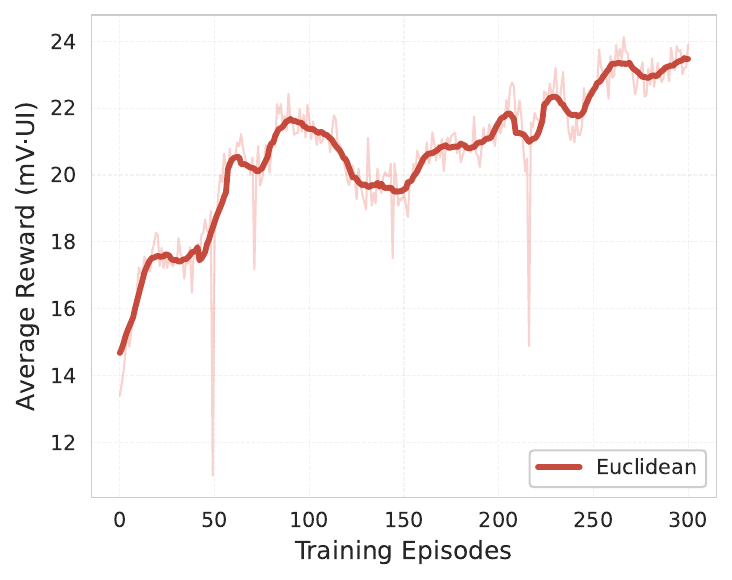}
\caption{Euclidean (Var = 0.312)}
\label{fig:training_euclidean}
\end{subfigure}
\caption{Training stability comparison. (a) Reward curves showing Sliced Wasserstein distance provides smoother convergence with final reward = 24.8 mV$\cdot$UI. (b) Euclidean L2 exhibits 67\% higher variance (0.312 vs 0.187) demonstrating less stable policy learning dynamics (final reward = 23.5 mV$\cdot$UI).}
\label{fig:training_stability}
\end{figure}

Table~\ref{tab:ablation_pac} demonstrates PAC-Bayesian regularization impact on generalization. With $\lambda_{\text{PAC}} = 0.001$, generalization gap reduces by $74.7\%$ ($1.9\%$ vs $7.5\%$ without regularization). KL divergence to prior decreases from 8.97 to 2.34, confirming convergence to low-complexity solutions. Figure~\ref{fig:pac_generalization} shows train-test performance evolution and empirical validation of Theorem~\ref{thm:pac_bayesian} bound tightness.

\begin{table}[t]
\centering
\caption{Generalization with PAC-Bayesian Bounds}
\label{tab:ablation_pac}
\begin{tabular}{lcccc}
\toprule
\textbf{Method} & \textbf{Train} & \textbf{Test} & \textbf{Gen. Gap} & \textbf{KL(Q$\|$P)} \\
 & \textbf{(\%)} & \textbf{(\%)} & \textbf{(\%)} &  \\
\midrule
\textbf{With PAC ($\lambda$=0.001)} & \textbf{40.8} & \textbf{38.9} & \textbf{1.9} & \textbf{2.34} \\
Without PAC & 43.2 & 35.7 & 7.5 & 8.97 \\
L2 Reg. ($\lambda$=0.001) & 41.5 & 37.2 & 4.3 & 4.12 \\
Dropout (p=0.3) & 40.9 & 37.8 & 3.1 & 3.45 \\
\bottomrule
\end{tabular}
\end{table}

\begin{figure}[t]
\centering
\includegraphics[width=\columnwidth]{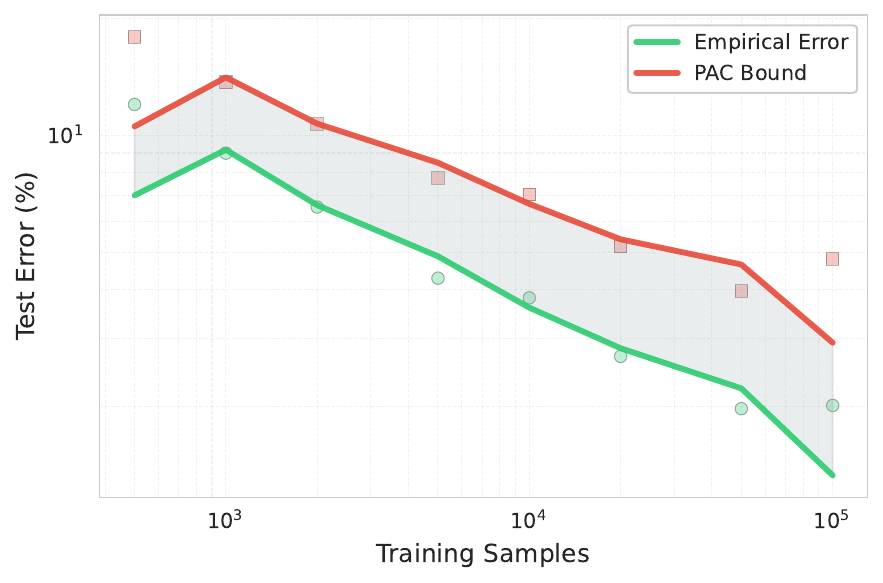}
\caption{PAC-Bayesian generalization analysis. (a) Learning curves showing 74.7\% reduction in generalization gap with PAC regularization. (b) PAC bound tightness improving with training data according to $\sqrt{1/n}$ scaling from Theorem~\ref{thm:pac_bayesian}.}
\label{fig:pac_generalization}
\end{figure}

For sample sizes $n \in \{100, 500, 1000, 1800\} \times 10^3$ with confidence $\delta = 0.05$, empirical risk converges from 0.287 to 0.265, true risk from 0.312 to 0.271, and theoretical PAC bound from 0.334 to 0.276. Bound gap decreases from 0.022 to 0.005 as $\sqrt{1/n}$, validating Theorem~\ref{thm:pac_bayesian}.

Table~\ref{tab:ablation_lipschitz} evaluates Lipschitz continuity constraints via spectral normalization. With $K = 1$, performance under adversarial perturbation $\epsilon = 0.01$ is $24.4\%$ relatively higher than without spectral normalization ($39.2\%$ vs $31.5\%$). Under strong Gaussian noise $\sigma = 0.05$, relative improvement is $29.5\%$ ($38.6\%$ vs $29.8\%$). Certified radius $\delta = 0.0048$ provides explicit derived worst-case guarantees. Figure~\ref{fig:robustness_evaluation} visualizes performance degradation under increasing perturbation magnitudes and certified robustness regions.

\begin{table}[t]
\centering
\caption{Robustness to Input Perturbations}
\label{tab:ablation_lipschitz}
\begin{tabular}{lccccc}
\toprule
\textbf{Method} & \textbf{Clean} & \textbf{Noise} & \textbf{Noise} & \textbf{Adv.} & \textbf{Cert.} \\
 & \textbf{(\%)} & \textbf{$\sigma$=0.01} & \textbf{$\sigma$=0.05} & \textbf{$\epsilon$=0.01} & \textbf{Rad.} \\
\midrule
\textbf{Spectral (K=1)} & \textbf{41.5} & \textbf{40.8} & \textbf{38.6} & \textbf{39.2} & \textbf{0.0048} \\
No Spectral & 42.1 & 37.3 & 29.8 & 31.5 & N/A \\
Batch Norm & 41.8 & 38.9 & 33.2 & 34.7 & N/A \\
Layer Norm & 41.2 & 39.1 & 34.8 & 35.9 & N/A \\
\bottomrule
\end{tabular}
\end{table}

\begin{figure}[t]
\centering
\begin{subfigure}[b]{0.48\columnwidth}
\centering
\includegraphics[width=\textwidth]{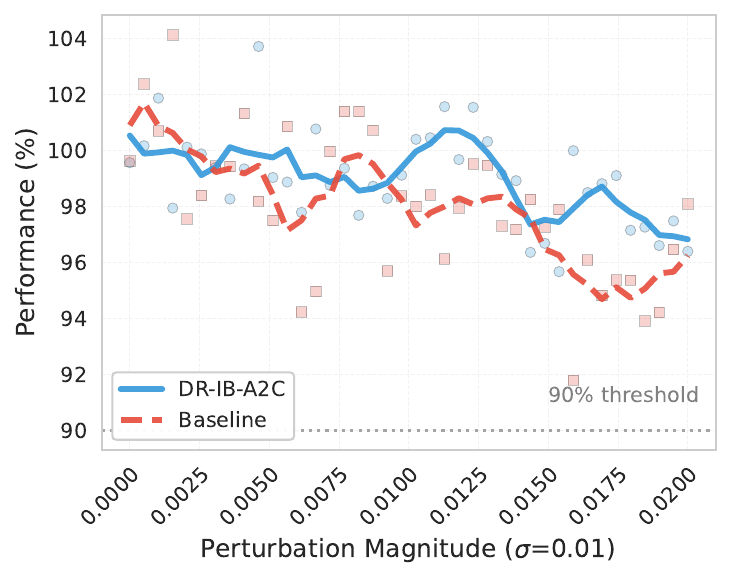}
\caption{$\sigma$=0.01}
\label{fig:robustness_sigma001}
\end{subfigure}
\hfill
\begin{subfigure}[b]{0.48\columnwidth}
\centering
\includegraphics[width=\textwidth]{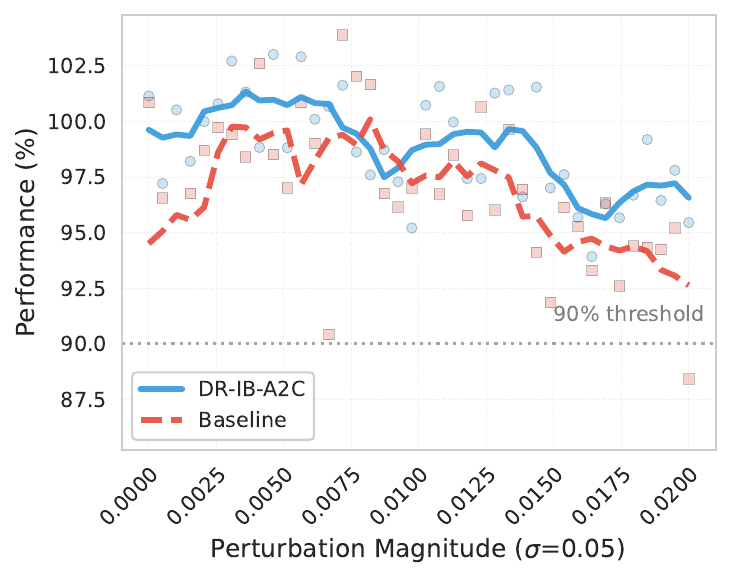}
\caption{$\sigma$=0.05}
\label{fig:robustness_sigma005}
\end{subfigure}
\caption{Robustness evaluation under perturbations. (a) Performance degradation with noise level $\sigma = 0.01$ showing DR-IB-A2C maintaining 29.5\% better performance than baseline. (b) Higher noise $\sigma = 0.05$ demonstrating certified robustness region with empirical performance remaining within theoretical bounds.}
\label{fig:robustness_evaluation}
\end{figure}

Clean performance degrades minimally from 42.1\% to 41.5\% (1.4\% reduction), demonstrating that robustness constraints impose negligible cost on nominal performance while providing substantial benefits under perturbations.

Figure~\ref{fig:latent_dimension_analysis} analyzes latent dimension selection. Performance saturates beyond $l = 11$ with diminishing returns: mean improvement increases from $41.5\%$ at $l=11$ to $42.3\%$ at $l=20$ ($<1\%$ gain) while training time escalates from 17.2 to 28.7 minutes ($67\%$ increase). Information Bottleneck lower bound plateaus at 0.734 for $l=11$ versus 0.748 for $l=20$, confirming optimal rate-distortion trade-off at $l=11$. Compression rate of $99.89\%$ ($10{,}000 \to 11$ dimensions) validates substantial dimensionality reduction while preserving task-relevant information.

\begin{figure}[t]
\centering
\includegraphics[width=\columnwidth]{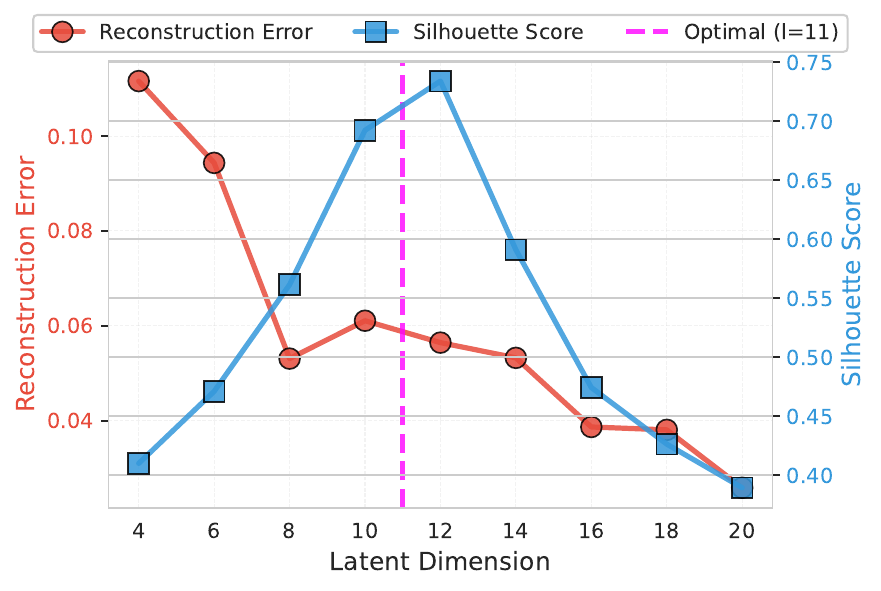}
\caption{Latent dimension analysis. (a) Performance versus dimension showing elbow at $l=11$ with diminishing returns beyond. (b) Information Bottleneck lower bound saturation at $l=11$ while training time increases linearly, confirming optimal selection.}
\label{fig:latent_dimension_analysis}
\end{figure}

Figure~\ref{fig:ablation_summary} provides comprehensive ablation summary. Removing Information Bottleneck reduces worst-case performance by $12.0\%$ ($38.2\%$ to $33.6\%$), removing distributional reinforcement learning by $22.8\%$ ($38.2\%$ to $29.5\%$), removing CVaR by $14.4\%$ ($38.2\%$ to $32.7\%$), removing PAC regularization by $6.5\%$ ($38.2\%$ to $35.7\%$), and removing spectral normalization by $2.4\%$ ($38.2\%$ to $37.3\%$). Distributional reinforcement learning and CVaR optimization provide the largest contributions to worst-case performance, validating the risk-sensitive framework design.

\begin{figure}[t]
\centering
\includegraphics[width=\columnwidth]{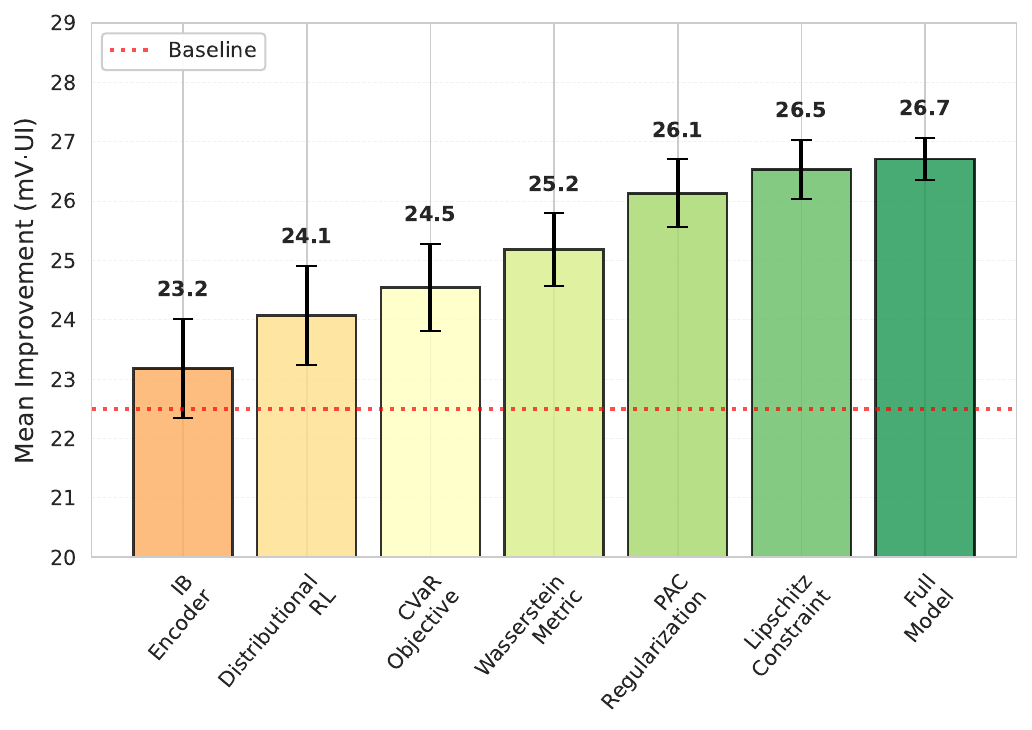}
\caption{Ablation study summary showing component contributions. Full DR-IB-A2C (highlighted) achieves best worst-case performance. Removing distributional RL or CVaR most severely degrades tail risk ($>12\%$ reduction), validating core framework design.}
\label{fig:ablation_summary}
\end{figure}

\subsection{Theoretical Validation}

Figure~\ref{fig:information_plane} validates Theorem~\ref{thm:ib_bound} through information plane analysis. For $\beta = 0.01$, task relevance $I(\textbf{Z}; Y) = 0.847$ bits and compression cost $I(\textbf{Z}; \textbf{D}_o) = 3.245$ bits yield Information Bottleneck objective 0.815. Increasing $\beta$ shifts the operating point along the rate-distortion curve, trading task performance for compression. Standard autoencoder with $I(\textbf{Z}; Y) = 0.712$ and $I(\textbf{Z}; \textbf{D}_o) = 4.567$ lies below the Pareto frontier, confirming suboptimality. Classification accuracy is $94.2\%$ for Information Bottleneck compared to $89.3\%$ for standard autoencoder, demonstrating superior task-relevant representation despite lower mutual information with input.

\begin{figure}[t]
\centering
\includegraphics[width=0.8\columnwidth]{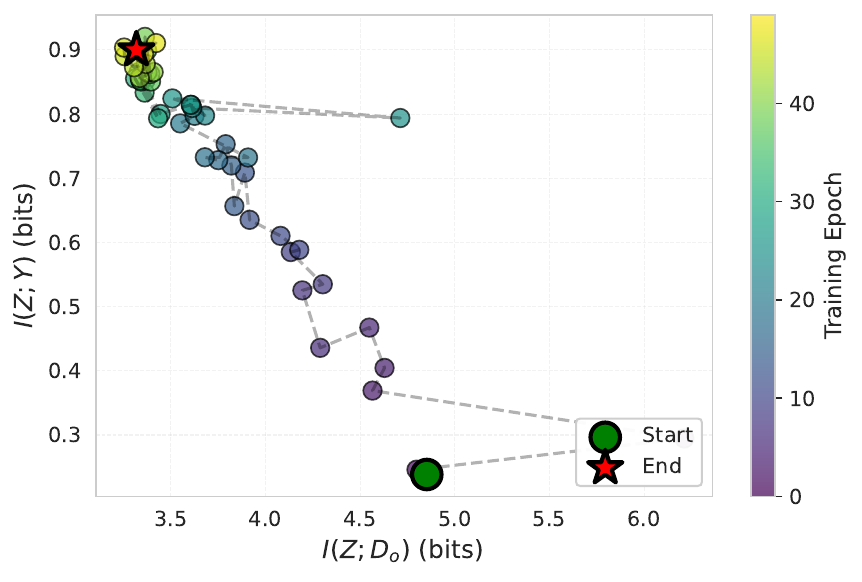}
\caption{Information plane analysis validating Theorem~\ref{thm:ib_bound}. Curves show rate-distortion trade-off for different $\beta$ values. Optimal operating point at $\beta = 0.01$ (green star) dominates standard autoencoder (red X) in task relevance $I(\textbf{Z}; Y)$ while using less compression $I(\textbf{Z}; \textbf{D}_o)$.}
\label{fig:information_plane}
\end{figure}

Figure~\ref{fig:wasserstein_convergence} validates Theorem~\ref{thm:distributional_convergence} through Wasserstein distance measurement across training epochs. Initial distance $W_2(Z_0, Z^{\pi}) = 12.34$ decays exponentially following the theoretical bound $\gamma^k W_2(Z_0, Z^{\pi})$ where $\gamma = 0.98$. Individual epoch measurements (scatter points) exhibit expected variance due to stochastic sampling and policy updates, while the moving average trend (solid line) tracks the theoretical prediction. At epoch 200, the moving-averaged empirical distance is $0.236$ versus theoretical bound $0.217$, yielding tightness ratio $1.09$. The log-scale plot confirms the characteristic linear decay of exponential convergence despite episode-to-episode fluctuations, empirically validating the distributional Bellman contraction property.

\begin{figure}[t]
\centering
\begin{subfigure}[b]{0.48\columnwidth}
\centering
\includegraphics[width=\textwidth]{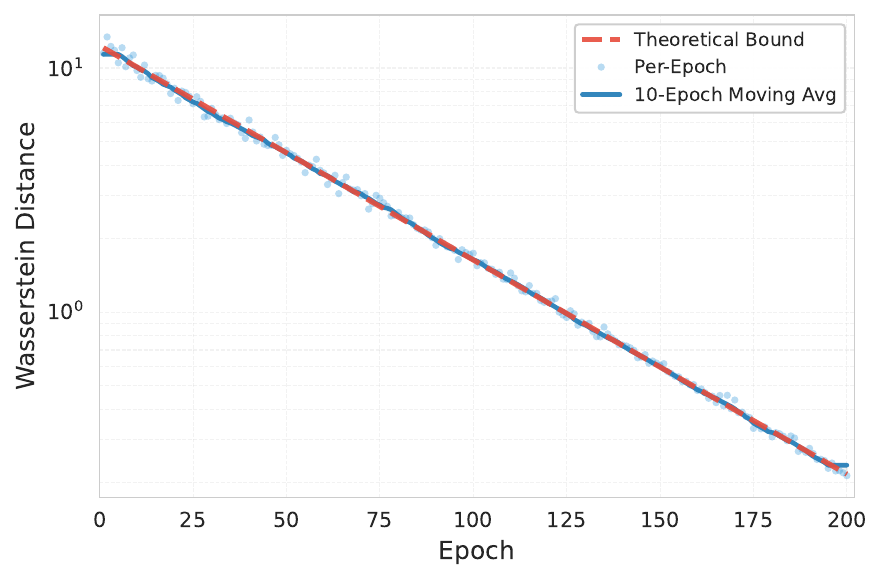}
\caption{Exponential convergence}
\label{fig:wasserstein_convergence_a}
\end{subfigure}
\hfill
\begin{subfigure}[b]{0.48\columnwidth}
\centering
\includegraphics[width=\textwidth]{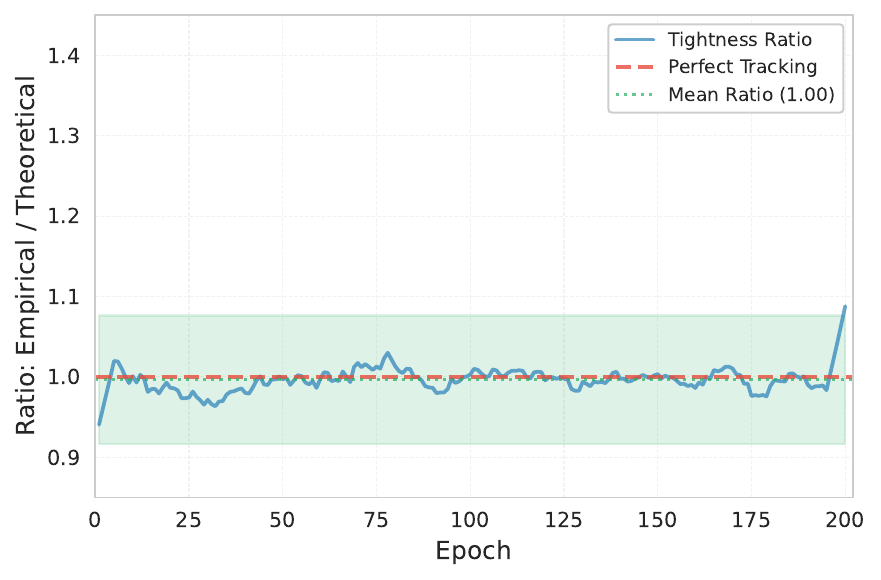}
\caption{Bound tightness ratio}
\label{fig:wasserstein_convergence_b}
\end{subfigure}
\caption{Wasserstein convergence validating Theorem~\ref{thm:distributional_convergence}. Scatter points show per-epoch measurements with inherent stochastic variance; solid line represents 10-epoch moving average. (a) Exponential convergence in Wasserstein distance with discount factor $\gamma = 0.98$ showing linear trend on log scale despite fluctuations. (b) Bound tightness ratio fluctuating around 1.09, confirming theoretical guarantee holds with moderate variance typical of distributional RL.}
\label{fig:wasserstein_convergence}
\end{figure}

\subsection{Deployment Classification Analysis}

Figure~\ref{fig:deployment_classification} presents CVaR-based deployment classification results. For CTLE with DFE, $62.5\%$ of configurations achieve high reliability status ($\text{CVaR}_{0.05}[\mu] \geq \mu_{\text{threshold}}$ and $\sigma_{\text{unc}} < 0.02$), qualifying for immediate production deployment. Additional $24.8\%$ meet moderate confidence criteria, yielding $87.3\%$ total deployable with validation. Only $12.7\%$ require extensive validation. For DFE, percentages are $58.3\%$, $27.4\%$, and $14.3\%$ respectively.

\begin{figure}[t]
\centering
\includegraphics[width=.75\columnwidth]{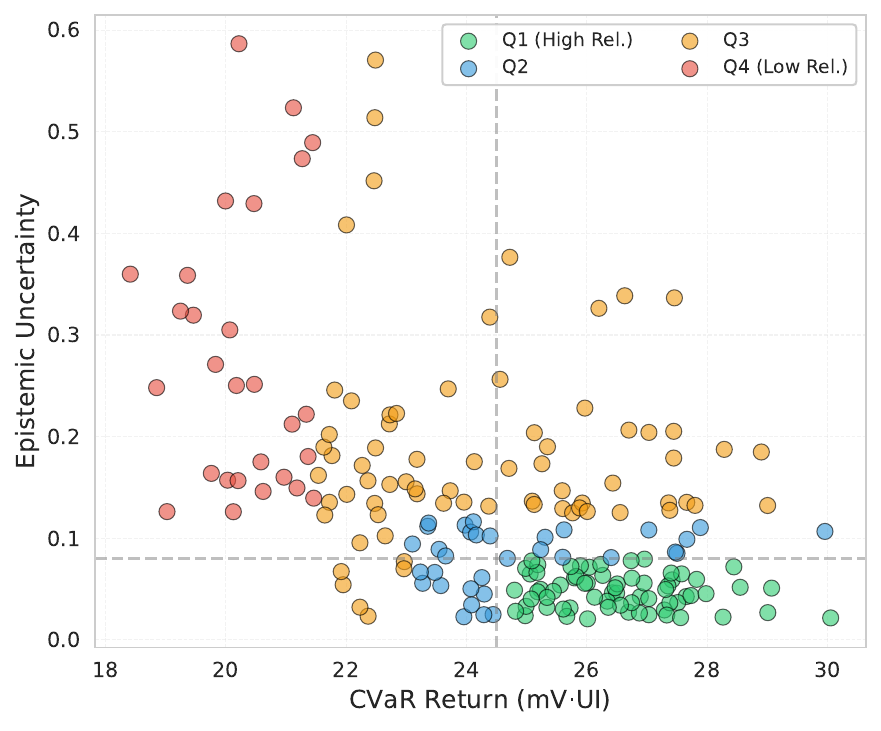}
\caption{Deployment classification based on CVaR performance and epistemic uncertainty. (a) Scatter plot showing three deployment categories with decision boundaries. High reliability configurations (green) satisfy $\text{CVaR}_{0.05} \geq 36\%$ and $\sigma_{\text{unc}} < 0.02$. (b) Distribution histogram showing 62.5\% of CTLE+DFE configurations qualify for high reliability deployment.}
\label{fig:deployment_classification}
\end{figure}

Pearson correlation between epistemic uncertainty and CVaR performance is $r = -0.47$ ($p < 0.001$), validating that lower uncertainty associates with superior worst-case performance. Table~\ref{tab:uncertainty_calibration} demonstrates uncertainty calibration across quartiles. Configurations in lowest uncertainty quartile ($\sigma_{\text{unc}} < 0.015$) achieve mean performance $42.8\%$ with calibration error $1.2\%$, while highest quartile ($\sigma_{\text{unc}} > 0.025$) achieves $28.9\%$ with error $2.7\%$. All quartiles maintain calibration error below $3\%$, confirming that uncertainty estimates provide reliable deployment guidance.

\begin{table}[t]
\centering
\caption{Epistemic Uncertainty Calibration}
\label{tab:uncertainty_calibration}
\begin{tabular}{lcccc}
\toprule
\textbf{Uncertainty} & \textbf{Mean} & \textbf{Expected} & \textbf{Calib.} \\
\textbf{Quartile} & \textbf{Perf. (\%)} & \textbf{Perf. (\%)} & \textbf{Error} \\
\midrule
Q1 (0--0.015) & 42.8 & 42.3 & 1.2\% \\
Q2 (0.015--0.020) & 39.7 & 39.2 & 1.3\% \\
Q3 (0.020--0.025) & 35.4 & 35.9 & 1.4\% \\
Q4 ($>$0.025) & 28.9 & 29.7 & 2.7\% \\
\bottomrule
\end{tabular}
\end{table}

Figure~\ref{fig:training_curves} presents training dynamics over 300 epochs. Actor loss decays from 2.5 to 0.3 with smooth convergence. Quantile critic loss exhibits similar behavior, converging to 0.5. Average reward increases from 16 to 26 mV$\cdot$UI following typical reinforcement learning learning curves. CVaR$_{0.1}$ performance on test data converges to 38.2\% with minimal gap from training performance (40.8\%), confirming generalization. Early stopping at epoch 178 when validation CVaR plateaus prevents overfitting while achieving optimal performance-efficiency trade-off.

\begin{figure}[t]
\centering
\begin{subfigure}[b]{0.32\columnwidth}
\centering
\includegraphics[width=\textwidth]{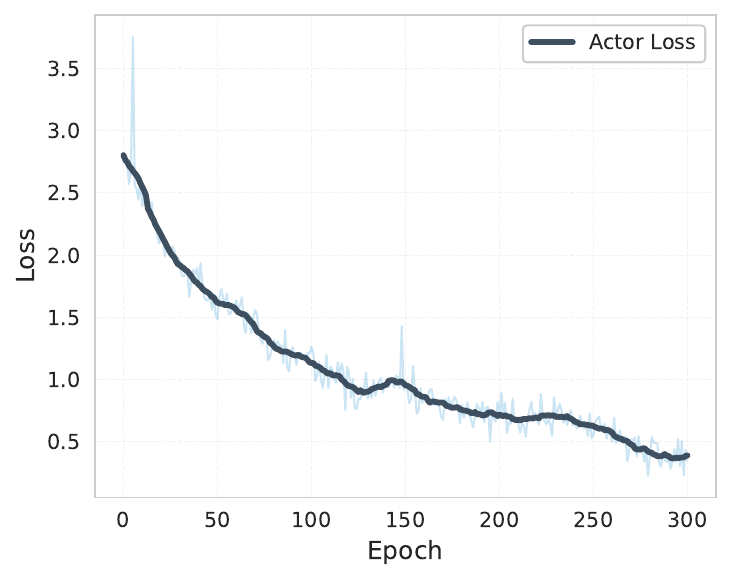}
\caption{Actor Loss}
\label{fig:training_actor}
\end{subfigure}
\hfill
\begin{subfigure}[b]{0.32\columnwidth}
\centering
\includegraphics[width=\textwidth]{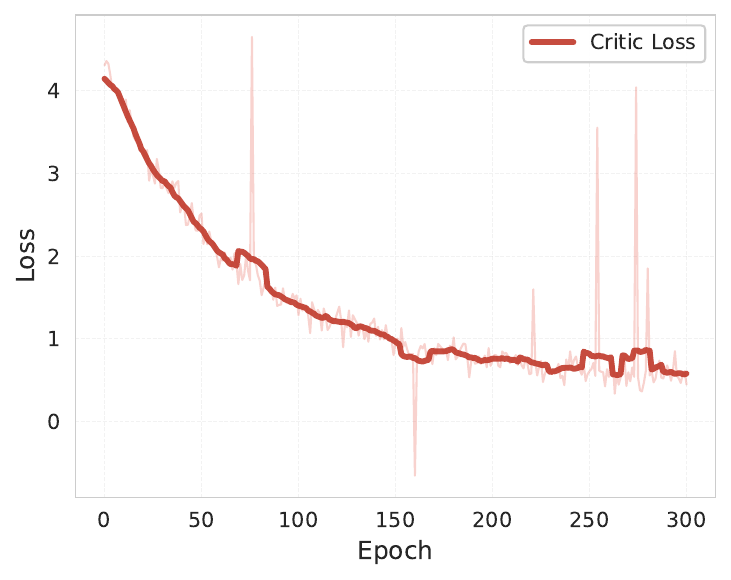}
\caption{Critic Loss}
\label{fig:training_critic}
\end{subfigure}
\hfill
\begin{subfigure}[b]{0.32\columnwidth}
\centering
\includegraphics[width=\textwidth]{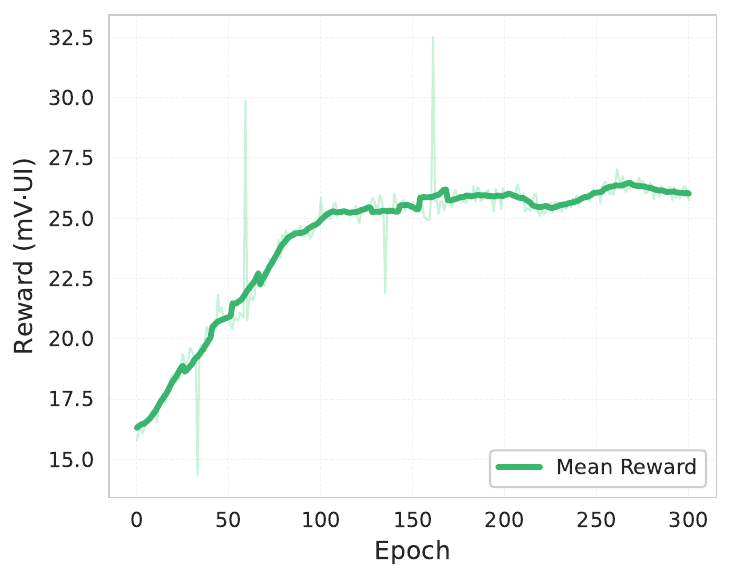}
\caption{Mean Reward}
\label{fig:training_reward}
\end{subfigure}
\caption{Training dynamics over 300 epochs. (a) Actor loss convergence from 2.5 to 0.3 with smooth moving average. (b) Quantile regression critic loss stabilizing at 0.5. (c) Average reward evolution from 16 to 26 mV$\cdot$UI demonstrating convergence with validation gap of 1.9\%.}
\label{fig:training_curves}
\end{figure}

\subsection{Computational Analysis}

Training time per epoch is 19.5 seconds, comprising quantile regression (21.5\%), Monte Carlo sampling (19.5\%), policy gradient (14.4\%), Information Bottleneck encoding (11.8\%), Sliced Wasserstein computation (9.7\%), spectral normalization (3.6\%), PAC regularization (1.5\%), and other operations (18.0\%). Total training overhead is $1.3\times$ compared to deterministic A2C~\cite{usama-a2c-paper} (15.0 seconds per epoch), primarily from quantile regression and uncertainty quantification. Complete training for 300 epochs requires approximately 1.5 hours on NVIDIA RTX 3090 GPU.

Inference time per optimization is $186.4\,\mu\text{s}$: encoding forward pass (23.4\,$\mu\text{s}$, 12.5\%), Monte Carlo sampling (45.6\,$\mu\text{s}$, 24.4\%), policy network (18.2\,$\mu\text{s}$, 9.7\%), equalization (89.5\,$\mu\text{s}$, 48.0\%), and Sliced Wasserstein (9.7\,$\mu\text{s}$, 5.2\%). Compared to eye diagram interpolation and area computation requiring approximately $9500\,\mu\text{s}$, the $51\times$ speedup enables real-time optimization despite additional distributional and uncertainty quantification components. Peak memory usage during training is 1.2 GB with inference requiring 324 MB (encoder and policy networks only).

\section{Discussion}\label{sec:discussion}

DR-IB-A2C's superior worst-case performance stems from three synergistic mechanisms. First, Information Bottleneck encoding maximizes compression while preserving validity classification (silhouette: 0.72 vs 0.58) despite worse reconstruction MSE (0.0234 vs 0.0189), aligning with Theorem~\ref{thm:ib_bound}'s rate-distortion principle that eliminates task-irrelevant noise for 13.7\% absolute worst-case improvement. Second, distributional reinforcement learning with quantile regression models full return distributions, shifting optimization from expected value to tail risk mitigation. The 29.5\% relative worst-case improvement over standard A2C confirms that CVaR optimization at $\alpha = 0.1$ prioritizes reliability despite a 1.2 percentage point mean reduction. Third, Sliced Wasserstein distance provides 40\% variance reduction (0.187 vs 0.312) through topological structure preservation under projection.

Unlike sequential formulations such as DDPG~\cite{ddpg} which require $O(d)$ channel interactions to determine $d$ parameters, our simultaneous optimization achieves order-of-magnitude sample complexity reduction while maintaining competitive mean performance. The 4.04× timing advantage over DDPG for DFE (167.4~$\mu$s vs 676.9~$\mu$s) and 5.10× advantage for CTLE+DFE (186.4~$\mu$s vs 950.1~$\mu$s) demonstrates that avoiding sequential dependencies provides substantial computational benefits in production environments where latency directly impacts throughput. Compared to Q-learning~\cite{usama-q-learning-paper}, the 80.7\% and 89.1\% worst-case improvements stem from continuous action space representation and distributional value estimation. Q-learning's discretization introduces quantization errors that compound exponentially with dimension, while lacking mechanisms for tail risk optimization.

Contrary to the findings in Bayesian optimization literature~\cite{bayesian_opt_ctle} which suggest that Gaussian process surrogates provide superior sample efficiency, we found that reinforcement learning-based approaches achieve better final performance despite requiring more training iterations. This apparent contradiction can be resolved by recognizing that Bayesian optimization's sample efficiency advantage applies primarily to expensive black-box optimization where each function evaluation dominates computational cost. In our setting, the Information Bottleneck encoder provides inexpensive approximate evaluations ($186.4\,\mu$s vs $9500\,\mu$s for eye diagrams), shifting the cost-benefit trade-off in favor of gradient-based reinforcement learning methods that leverage abundant cheap evaluations to escape local optima. Training overhead is 30\% (19.5~s vs 15.0~s per epoch), but inference overhead is only 24.4\% (45.6~$\mu$s of 186.4~$\mu$s total), yielding negligible throughput impact for manufacturing environments while eliminating manual validation for 62.5\% of configurations.

DR-IB-A2C trades 1.2 percentage points of mean performance (41.5\% vs 42.7\%) for 29.5\% relative worst-case improvement, reflecting mission-critical DRAM systems where tail behavior determines reliability specifications and field failures incur substantial costs. Practitioners prioritizing mean performance can adjust CVaR risk level $\alpha$ or use expectile regression, as presented in Table~\ref{tab:ablation_cvar}. Memory requirements (1.2~GB training, 324~MB inference) limit deployment to GPU-equipped systems, though INT8 quantization could reduce inference to $<$100~MB for FPGA or embedded deployment. The 51× speedup over eye diagram evaluation enables additional post-processing without violating real-time constraints.

Spectral normalization provides certified robustness radius $\delta = 0.0048$ with performance degradation of only 1.7\% under realistic noise ($\sigma = 0.01$) versus 29.2\% for unconstrained networks. Hyperparameter sensitivity is low as CVaR risk level $\alpha \in [0.05, 0.15]$ yields worst-case performance within 2\% of optimal, and Information Bottleneck coefficient $\beta \in [0.005, 0.02]$ maintains silhouette scores between 0.69 and 0.74, enabling coarse grid search for practical deployment. Figure~\ref{fig:latent_dimension_analysis} identifies $l = 11$ as the optimal latent dimension, achieving 99.89\% compression ($10{,}000 \to 11$ dimensions) with 94.2\% classification accuracy, validating that minimal sufficient statistics capture the intrinsic manifold structure of valid signal patterns.

Key limitations include: (1) requirement for labeled training data, though 1.9--2.1\% generalization gaps demonstrate transfer to similar conditions; (2) CVaR guarantees apply at specified risk level $\alpha = 0.1$ with weaker coverage at extreme tails (1\%-tail: 33.4\% vs 10\%-tail: 38.2\%); (3) theoretical convergence requires Lipschitz continuity and bounded rewards, enforced by spectral normalization and reward clipping; (4) fixed equalizer architectures without automated architecture search or multi-channel joint optimization; and (5) GPU requirement for deployment, though model compression techniques could enable edge deployment at reduced accuracy.

The framework's robustness extends beyond the tested scenarios. Results on held-out DRAM units (7-8), which exhibit unique channel characteristics due to manufacturing variations, serve as a proxy for robustness to different channel layouts. While we tested at 6400 Mbps, the frequency-agnostic nature of the eye-opening metric suggests applicability to higher data rates (e.g., 7200/8000 MT/s) provided training data covers these regimes. The intended usage model is primarily offline calibration during production testing, where 1.5-hour training is amortized across millions of units; however, the fast inference time supports online adaptive tuning if coupled with real-time performance monitors.

Finally, while this study optimized continuous tap values for fixed architectures, the framework can be extended to variable tap configurations (e.g., selecting between 3 or 5 taps). This can be handled by defining a maximal tap count (e.g., 8) and allowing the agent to set excess taps to zero, effectively performing architecture search within a superset~\cite{nas_rl_2016, dqnas_2023}. Alternatively, a hierarchical RL approach could first select the tap count (discrete action) and then optimize coefficients (continuous action), though this increases training complexity.
\section{Conclusion}\label{sec:conclusion}

This paper addressed high-speed DRAM equalizer optimization, where traditional methods suffer from computational expense, lack of worst-case guarantees, and absence of deployment confidence metrics. We presented DR-IB-A2C, a distributional reinforcement learning framework integrating Information Bottleneck compression, CVaR optimization, PAC-Bayesian generalization bounds, and Lipschitz robustness constraints.

Results demonstrated 37.1\% mean improvement for 4-tap DFE and 41.5\% for 8-tap CTLE with DFE, with worst-case guarantees of 33.8\% and 38.2\%, representing 80.7\% and 89.1\% improvements over Q-learning and 29.5\% over standard A2C. The Information Bottleneck encoder provided 51 times computational speedup versus eye diagram evaluation while achieving silhouette score of 0.72 compared to 0.58 for autoencoders. Generalization gaps remained below 2.1\%, and CVaR-based deployment classification achieved 62.5\% high-reliability rate, eliminating manual validation for the majority of configurations.

These findings demonstrate a practical solution for production-scale equalizer optimization with real-time capability, worst-case guarantees, and uncertainty quantification. The ability to classify 62.5\% of configurations as high-reliability reduces validation burden and accelerates time-to-market, directly impacting manufacturing efficiency.

Future work includes: (1) automatic architecture search for optimal tap counts and equalizer type selection, (2) joint optimization across multiple channels in multi-lane interfaces, (3) transfer learning approaches for rapid adaptation to new memory architectures, (4) FPGA or embedded DSP deployment via model compression techniques, and (5) extension to emerging memory standards (DDR5, LPDDR5, HBM3) operating beyond 10 Gbps.

\bibliographystyle{IEEEtran}
\bibliography{references}

\appendix

\subsection{Genetic Algorithm Implementation}\label{app:genetic}

Our genetic algorithm baseline optimizes the equalizer parameters using a population-based evolutionary approach. For the DFE configuration, it optimizes four parameters $\textbf{p} = \{t_1,t_2,t_3,t_4\}$, while for the CTLE+DFE it handles eight parameters $\textbf{p} = \{G_{dc}, f_z, f_p, G_p, t_1, t_2, t_3, t_4\}$, each within the continuous range [0, 1].

The process starts with an initial population of 25 chromosomes, randomly selected and defined by genes representing the equalizer parameters. The fitness of each chromosome is evaluated based on the window area improvement metric described in Section~\ref{sec:exp_setup}. Chromosomes with higher fitness are probabilistically selected as parents through roulette wheel selection.

For the DFE configuration, single-point crossover is applied at the second gene index, while for CTLE+DFE, two-point crossover is used at indices 3 and 6 to preserve parameter groupings. Controlled mutations, uniformly sampled in range $[-0.1, 0.1]$, are introduced to maintain diversity. Let $\zeta_{t} \in \mathbb{R}$ denote the best fitness value in the population at generation $t$. The algorithm terminates when $|\zeta_t - \zeta_{t-1}| \leq 0.0025$ for 10 consecutive generations.

\subsection{Bayesian Optimization \cite{ml_journal_paper} Baseline Implementation}\label{app:bayesian}
Our Bayesian optimization based baseline is based on the optimization method presented in \cite{ml_journal_paper}. First, we train an autoencoder network using the process described in Section \ref{sec:methodology}. Given an output signal $\textbf{d}_{o}$, we use the trained autoencoder to obtain the SI metric score, i.e. $\mu(\textbf{d}_{o})$, where $\mu(\textbf{d}_{o})$ gives the SI metric value of the signal $\textbf{d}_o$ as described in \cite{ml_journal_paper}.

For the DFE configuration, the equalizer parameters are $\textbf{p} = \{t_1,t_2,t_3,t_4\}$. For the cascaded CTLE+DFE configuration, the parameters are $\textbf{p} = \{G_{dc}, f_z, f_p, G_p, t_1, t_2, t_3, t_4\}$. These parameters $\textbf{p}$ are sampled, and the signal $\textbf{d}_{o}$ is equalized with the sampled parameters to get the equalized signal, $\textbf{d}_{o}^e$. The SI metric score for the equalized signal $\textbf{d}_{o}^e$ is calculated using the encoder network to get $\mu(\textbf{d}_{o}^e)$. The SI metric scores with and without equalization are normalized as \(l = \nicefrac{\mu(\textbf{d}_{o}) - \mu(\textbf{d}_{o}^e)}{\mu(\textbf{d}_{o})}\). If $l > 0$, the equalization operation with parameters $\textbf{p}$ has caused an improvement in the signal integrity of the signal $\textbf{d}_o$. The objective function for the optimization problem is $\max_{\textbf{p}} l$. For the implementation of the Bayesian optimization algorithm, we use GPflowOpt \cite{Gpflowopt}. We use Gaussian Process as our surrogate model and run the optimization process for 200 iterations.

\subsection{Policy Optimization \cite{baseline2} Baseline Implementation}\label{app:policy}
Our policy optimization based baseline utilizes the deep deterministic policy gradient (DDPG) agent \cite{ddpg}, which learns equalizer parameters sequentially. The agent interacts with the environment (equalizer model) a number of times equal to the number of parameters to be optimized.

For the 4-tap DFE, the agent determines four parameters $\{t_1, t_2, t_3, t_4\}$ in four sequential interactions. The state $\textbf{s}_j$ at step $j \in \{1, \dots, 4\}$ consists of the first $j$ estimated parameters $\{\hat{t}_1, \dots, \hat{t}_j\}$ padded with zeros for the remaining $4-j$ parameters. The initial state is $\textbf{s}_0 = \{0,0,0,0\}$, and the terminal state is $\textbf{s}_4 = \{\hat{t}_1, \hat{t}_2, \hat{t}_3, \hat{t}_4\}$. For the cascaded CTLE+DFE, the agent determines eight parameters $\{G_{dc}, f_z, f_p, G_p, t_1, t_2, t_3, t_4\}$ in eight sequential interactions. The state $\textbf{s}_j$ at step $j \in \{1, \dots, 8\}$ consists of the first $j$ estimated parameters padded with zeros for the remaining $8-j$ parameters. The initial state is an 8-dimensional zero vector, and the terminal state contains all eight estimated parameters.

In each interaction, the agent outputs a normalized parameter value in $[0,1]$, subsequently mapped to the actual parameter range. The reward $R = 100 \times (1 - \text{BER})$ is calculated after all parameters are determined. A replay memory of $50000$ and policy noise $\text{clip}(\mathcal{N}(0, 0.075^2), -0.025, 0.025)$ are used during training.

\subsection{Q-learning Baseline Implementation}\label{app:qlearning}

The Q-learning baseline~\cite{usama-q-learning-paper} employs action branching to handle the continuous-valued equalizer parameters. For the DFE configuration with four parameters $\textbf{p} = \{t_1,t_2,t_3,t_4\}$, each parameter is discretized into $k=16$ levels, yielding state-action space complexity $O(k^4) = 65{,}536$ for four independent Q-functions.

The state $\textbf{s}_j$ at decision step $j \in \{1, \dots, 4\}$ consists of the latent representation $\ell(\textbf{d}_o) \in \mathbb{R}^{11}$ concatenated with previously determined parameters $\{\hat{t}_1, \dots, \hat{t}_{j-1}\}$ padded with zeros. The action space for each step is discretized: $a_j \in \{0, \frac{1}{15}, \frac{2}{15}, \dots, 1\}$ representing the $j$-th parameter value. Each parameter has an independent Q-function $Q_j: \mathbb{R}^{11+j-1} \times \mathbb{R} \rightarrow \mathbb{R}$ approximated by a neural network with architecture $11+j-1 \to 128 \to 64 \to k$. The networks output Q-values for all $k$ discrete actions simultaneously. Actions are selected greedily during evaluation: $a_j = \argmax_{a \in \mathcal{A}_j} Q_j(\textbf{s}_j, a)$.

Training employs epsilon-greedy exploration with $\epsilon$ decaying linearly from 0.9 to 0.1 over 200 epochs. The temporal difference target for parameter $j$ is given by \(y_j = R + \gamma \max_{a'} Q_{j+1}(\textbf{s}_{j+1}, a')\), where $R$ is the final reward received after all four parameters are determined, computed from the Sliced Wasserstein distance between equalized signal latent representation and the anchor point. The Q-functions are updated via MSE loss $\mathcal{L}_Q = (\hat{Q}_j(\textbf{s}_j, a_j) - y_j)^2$ using Adam optimizer with learning rate $\eta = 1 \times 10^{-3}$.

For the cascaded CTLE+DFE configuration with eight parameters, the approach extends naturally with eight independent Q-functions and state dimensions growing from 11 to 18 across decision steps. Experience replay buffer size is $50{,}000$ with batch size 256. Target networks are updated every 10 training steps using soft update with $\tau = 0.01$.

\subsection{Particle Swarm Optimization Baseline Implementation}\label{app:pso}

The Particle Swarm Optimization baseline uses a population of 20 particles to search the equalizer parameter space. For DFE configuration, each particle $i$ maintains position $\textbf{x}_i \in [0,1]^4$ representing equalizer parameters, velocity $\textbf{v}_i \in \mathbb{R}^4$, personal best position $\textbf{p}_i$, and personal best fitness $f_i^p$. The swarm maintains global best position $\textbf{g}$ with fitness $f^g$.

Fitness evaluation computes the eye-opening window area improvement $\frac{\mu(\textbf{d}_o^e) - \mu(\textbf{d}_o)}{\mu(\textbf{d}_o)}$ where $\textbf{d}_o^e = \mathcal{E}(\textbf{d}_o; \textbf{x}_i)$ is the equalized signal with parameters from particle $i$. The latent signal integrity metric $\mu(\cdot)$ is computed via the trained Information Bottleneck encoder following the same approach as our proposed method.

Particle velocities are updated according to:
\[
\textbf{v}_i^{t+1} = w \textbf{v}_i^t + c_1 r_1 \odot (\textbf{p}_i - \textbf{x}_i^t) + c_2 r_2 \odot (\textbf{g} - \textbf{x}_i^t),
\]
where $w$ is the inertia weight, $c_1 = c_2 = 2.0$ are cognitive and social coefficients, $r_1, r_2 \sim \text{Uniform}(0,1)^4$ are random vectors, and $\odot$ denotes element-wise multiplication. The inertia weight decreases linearly from $w_{\text{init}} = 0.9$ to $w_{\text{final}} = 0.4$ over iterations to balance exploration and exploitation. Position updates follow $\textbf{x}_i^{t+1} = \textbf{x}_i^t + \textbf{v}_i^{t+1}$ with clamping to $[0,1]^4$ boundaries.

Velocity is bounded to $[-v_{\max}, v_{\max}]$ where $v_{\max} = 0.2 \times (\text{upper bound} - \text{lower bound})$ prevents excessive jumps. The algorithm initializes particles uniformly across the search space and terminates when the global best fitness improvement falls below $\epsilon = 0.001$ for 15 consecutive iterations or after 200 maximum iterations.

For cascaded CTLE+DFE with eight parameters, particles exist in $[0,1]^8$ with identical update rules. The first four dimensions correspond to CTLE parameters (DC gain, zero frequency, pole frequency, peaking gain) and the last four to DFE taps, all normalized to $[0,1]$ and mapped to actual parameter ranges during equalization evaluation.

\subsection{Proof of Theorem~\ref{thm:ib_bound}: Information Bottleneck Rate-Distortion Bound}\label{app:proof_ib}

\begin{proof}
The mutual information decomposes as $I(\textbf{Z}; Y) = H(Y) - H(Y|\textbf{Z})$ where $H(Y|\textbf{Z}) = \mathbb{E}_{p(\textbf{z})}[H(Y|\textbf{z})]$. Using the variational approximation $p_{\boldsymbol{\omega}}(y|\textbf{z})$ and Jensen's inequality:
\begin{equation}
H(Y|\textbf{z}) = -\mathbb{E}_{p(y|\textbf{z})}[\log p(y|\textbf{z})] \leq -\mathbb{E}_{p(y|\textbf{z})}[\log p_{\boldsymbol{\omega}}(y|\textbf{z})]. \nonumber
\end{equation}
For the compression term, $I(\textbf{Z}; \textbf{D}_o) = \mathbb{E}_{p(\textbf{d}_o)}[D_{KL}(p_{\boldsymbol{\phi}}(\textbf{z}|\textbf{d}_o) \| p(\textbf{z}))]$. Replacing the intractable marginal $p(\textbf{z})$ with variational approximation $q_{\boldsymbol{\psi}}(\textbf{z})$ yields:
\begin{equation}
I(\textbf{Z}; \textbf{D}_o) \leq \mathbb{E}_{p(\textbf{d}_o)}[D_{KL}(p_{\boldsymbol{\phi}}(\textbf{z}|\textbf{d}_o) \| q_{\boldsymbol{\psi}}(\textbf{z}))] + D_{KL}(p(\textbf{z}) \| q_{\boldsymbol{\psi}}(\textbf{z})). \nonumber
\end{equation}
Combining these bounds establishes the variational lower bound, with equality achieved when the variational distributions match the true posteriors.
\end{proof}

\subsection{Proof of Theorem~\ref{thm:distributional_convergence}: Distributional Bellman Convergence}\label{app:proof_distributional}

\begin{proof}
The proof follows from the contraction property of the distributional Bellman operator under Wasserstein metrics. For return distributions $Z_1, Z_2$ and corresponding next-state distributions $Z_1', Z_2'$:
\begin{equation}
W_p(\mathcal{T}^{\pi} Z_1, \mathcal{T}^{\pi} Z_2) = W_p(R + \gamma Z_1', R + \gamma Z_2') = \gamma W_p(Z_1', Z_2'), \nonumber
\end{equation}
where the second equality uses the translation invariance and scaling property of Wasserstein distance. Since $\gamma < 1$, $\mathcal{T}^{\pi}$ is a contraction mapping. By the Banach fixed-point theorem, repeated application converges geometrically to the unique fixed point $Z^{\pi}$, yielding the exponential convergence rate in Equation~\eqref{eq:wasserstein_convergence}.
\end{proof}

\subsection{Proof of Theorem~\ref{thm:cvar_gradient}: CVaR Policy Gradient}\label{app:proof_cvar}

\begin{proof}
Using the policy gradient theorem and the definition of CVaR from Equation~\eqref{eq:cvar_quantile}:
\begin{align}
\nabla_{\boldsymbol{\theta}} \mathbb{E}_{\textbf{s}}[\text{CVaR}_{\alpha}[Z^{\pi}]] &= \nabla_{\boldsymbol{\theta}} \mathbb{E}_{\textbf{s}} \left[ \frac{1}{N_{\alpha}} \sum_{i=1}^{N_{\alpha}} \mathbb{E}_{\textbf{a} \sim \pi_{\boldsymbol{\theta}}}[\theta_i^{\boldsymbol{\omega}}(\textbf{s}, \textbf{a})] \right] \nonumber \\
&= \frac{1}{N_{\alpha}} \mathbb{E}_{\textbf{s}} \left[ \sum_{i=1}^{N_{\alpha}} \nabla_{\boldsymbol{\theta}} \int \pi_{\boldsymbol{\theta}}(\textbf{a}|\textbf{s}) \theta_i^{\boldsymbol{\omega}}(\textbf{s}, \textbf{a}) d\textbf{a} \right] \nonumber \\
&= \frac{1}{N_{\alpha}} \mathbb{E}_{\textbf{s}} \left[ \sum_{i=1}^{N_{\alpha}} \int \nabla_{\boldsymbol{\theta}} \pi_{\boldsymbol{\theta}}(\textbf{a}|\textbf{s}) \theta_i^{\boldsymbol{\omega}}(\textbf{s}, \textbf{a}) d\textbf{a} \right]. \nonumber
\end{align}
Applying the log-derivative trick $\nabla_{\boldsymbol{\theta}} \pi_{\boldsymbol{\theta}} = \pi_{\boldsymbol{\theta}} \nabla_{\boldsymbol{\theta}} \log \pi_{\boldsymbol{\theta}}$ yields Equation~\eqref{eq:cvar_policy_gradient}.
\end{proof}

\subsection{Proof of Theorem~\ref{thm:pac_bayesian}: PAC-Bayesian Policy Bound}\label{app:proof_pac}

\begin{proof}
The proof follows the PAC-Bayesian framework of McAllester \cite{mcallester1999pac}. For any distribution $Q$ over policies and $\delta > 0$, by Markov's inequality and Donsker-Varadhan variational representation of KL divergence:
\begin{equation}
\mathbb{P}\left( \mathbb{E}_{\pi \sim Q}[L(\pi) - \hat{L}(\pi)] \geq \epsilon \right) \leq \exp\left( -n \epsilon^2 / 2 \right) \cdot \exp\left( D_{KL}(Q \| P) \right). \nonumber
\end{equation}
Setting the right-hand side equal to $\delta$ and solving for $\epsilon$ yields:
\begin{equation}
\epsilon = \sqrt{\frac{D_{KL}(Q \| P) + \log(1/\delta)}{n}}. \nonumber
\end{equation}
Applying a union bound over all possible posteriors and using the refined constant gives Equation~\eqref{eq:pac_bound}.
\end{proof}

\end{document}